\documentclass[journal]{IEEEtran}
\usepackage{cite}
\ifCLASSINFOpdf
\else
\fi
\usepackage[cmex10]{amsmath}
\usepackage{amssymb}
\usepackage{amsthm}
\interdisplaylinepenalty=2500
\usepackage{algorithmic}
\usepackage{array}

\ifCLASSOPTIONcompsoc
  \usepackage[caption=false,font=normalsize,labelfont=sf,textfont=sf]{subfig}
\else
  \usepackage[caption=false,font=footnotesize]{subfig}
\fi

\usepackage{fixltx2e}
\usepackage{url}
\usepackage{microtype}
\usepackage{graphicx}
\usepackage{epstopdf}
\usepackage{tikz}
\usepackage{pgfplots}
\usetikzlibrary{decorations}
\usetikzlibrary{patterns}

\pgfplotsset{compat=newest}
\pgfplotsset{plot coordinates/math parser=false}
\newlength\fheight
\newlength\fwidth
\makeatletter
\newcommand\resetstackedplots{
\pgfplots@stacked@isfirstplottrue
\addplot [forget plot,draw=none] coordinates{(1,0) (2,0) (3,0) (4,0) (5,0) (6,0)};
}

\usepackage{booktabs}
\usepackage{color}
\usepackage[normalem]{ulem}

\newtheorem{theorem}{Theorem}
\newtheorem{lemma}{Lemma}
\newtheorem{definition}{Definition}

\newcommand{\I}{\mathcal{I}}
\newcommand{\p}{\Psi}
\newcommand{\pa}{\texttt{conreg1}}
\newcommand{\Tr}{\mathrm{Tr}}
\newcommand{\pb}{\texttt{conreg2}}

\begin{document}
\bstctlcite{IEEEexample:BSTcontrol}
\title{Non-iterative rigid 2D/3D point-set registration using semidefinite programming}

\author{\IEEEauthorblockN{Yuehaw Khoo\IEEEauthorrefmark{1}\IEEEauthorrefmark{2} and \and
Ankur Kapoor\IEEEauthorrefmark{1}}\\
\IEEEauthorblockA{\IEEEauthorrefmark{1}Imaging and Computer Vision, Siemens Research Coorporation, Princeton, NJ 08540, USA}\\
\IEEEauthorblockA{\IEEEauthorrefmark{2}Department of Physics, Princeton University, Princeton, NJ 08540 USA}%
\thanks{Y. Khoo (email: ykhoo@princeton.edu), A. Kapoor (email: ankur.kapoor@siemens.com).}}

\IEEEtitleabstractindextext{%
\begin{abstract}
We describe a convex programming framework for pose estimation in 2D/3D point-set registration with unknown point correspondences. We give two mixed-integer nonlinear program (MINLP) formulations of the 2D/3D registration problem when there are multiple 2D images, and propose convex relaxations for both of the MINLPs to semidefinite programs (SDP) that can be solved efficiently by interior point methods. Our approach to the 2D/3D registration problem is non-iterative in nature as we jointly solve for pose and correspondence. Furthermore, these convex programs can readily incorporate feature descriptors of points to enhance registration results. We prove that the convex programs exactly recover the solution to the MINLPs under certain noiseless condition. We apply these formulations to the registration of 3D models of coronary vessels to their 2D projections obtained from multiple intra-operative fluoroscopic images. For this application, we experimentally corroborate the exact recovery property in the absence of noise and further demonstrate robustness of the convex programs in the presence of noise.
\end{abstract}

\begin{IEEEkeywords}
Rigid registration, 2D/3D registration, iterative-closest-point, convex relaxation, semidefinite programming.
\end{IEEEkeywords}}

\maketitle
\IEEEdisplaynontitleabstractindextext
\IEEEpeerreviewmaketitle

\section{Introduction}
%
%
%
%
\IEEEPARstart{R}{igid} registration of two point sets is a classical problem in computer vision and medical imaging of finding a transformation that aligns these two sets. Typically, a point-set registration problem consists of two intertwined subproblems, \emph{pose estimation} and \emph{point correspondence}, where solving one is often the pre-condition for solving the other. A canonical formulation of the rigid point-set registration problem for two point clouds is the following. Let $X,Y\in \mathbb{R}^{d\times m}$ be two point sets in dimension $d$, we want to solve:
\begin{equation}
\label{classical reg}
\min_{P \in \Pi_d^{m\times m}, R \in \mathbb{SO}(d)} \|R X - Y P\|_F^2,
\end{equation}
where $\Pi_d^{m\times m}$ is the set of $m \times m$ permutation matrices, and $R\in \mathbb{SO}(d)$ is the rotation matrix from the special orthogonal group in $d$ dimension. Finding a solution to this problem is difficult as it is a nonconvex, mixed integer nonlinear problem. Another close relative of this problem, namely the 2D/3D point-set registration, assumes a 3D point-set and 2D projections of the 3D point-set. The objective is to find out the pose of the 3D model that gives rise to the 2D  degenerate point-set upon projections. This adds an additional complication to that of the regular point-set registration problem, namely, the loss of depth information. In this paper, we focus on solving forms of 2D/3D point-set registration problem described in Section~\ref{section:Problem statement} with guarantees using our semidefinite programs \texttt{conreg1} and \texttt{conreg2}. We further demonstrate their usefulness in the application of coronary vessel imaging.

From a broader perspective, 2D/3D point-set registration problem arises in numerous medical imaging applications in fields such as neurology~\cite{ hipwell2003neuro}, orthopaedics~\cite{ benameur2003ortho}, and cardiology~\cite{ ruijters2009coronary}. The associated body of literature on 2D/3D registration is expanding rapidly, as is apparent in the thorough review of techniques recently published by Markelj et al.~\cite{ markelj2012review}. Amongst the vast literature on the 2D/3D registration problem, we briefly discuss the techniques most closely related to the one presented in this paper, namely, methods for registration of point-sets. 

As mentioned earlier, a typical point-set registration problem consists of two mutually interlocked subproblems, pose estimation and point correspondence. The key idea in seminal Iterative Closest Point (\texttt{ICP})~\cite{besl1992icp} algorithm is to alternatively solve the two subproblems starting from an initial estimate of pose. For the correspondence subproblem, it uses closeness in terms of Euclidean distance between two points to determine whether they correspond to each other. There are many variants proposed to enhance the original \texttt{ICP}, either to make fast correspondence \cite{rusinkiewicz2001ICP} or to include more features to obtain high quality correspondence \cite{sharp2002icp}. Variants for 2D/3D registration include~\cite{Benseghir2013,rivest2012nonrigid,zheng2006point,kang2014robustness}. Though popular, these methods suffer for common drawbacks, namely, they are all \emph{local} methods and do not guarantee global convergence. Their performances all rely on a good initialization and the spatial configuration (distribution) of 3D points.
In 2D/3D point-set registration, for this type of local optimizers several strategies have been proposed to increase the capture range by the use of multi-resolution pyramids~\cite{lau2006global}, use of stochastic global optimization methods such as differential evolution~\cite{rivest2012nonrigid} or direct search wherein the parameter space is systematically and deterministically searched by dividing it into smaller and smaller hyperrectangles~\cite{rivest2012nonrigid}. In these cases, except in direct search, the guarantees on finding the correct global minima are very weak. In the case of direct search one requires the parameter space to be finite.

Another line of work~\cite{chui2000nonrigid,myronenko2010point,jian2011gmmreg} in point-set registration focus on using \emph{soft} or inexact correspondences to enhance the search for global optimizers. In a recent work \cite{baka2014ogmm}, the Oriented Gaussian Mixture Model (\texttt{OGMM}) method is proposed to extend the successful Gaussian mixture based nonrigid registration algorithm (\texttt{gmmreg})~\cite{jian2011gmmreg} to the 2D/3D case. These methods model the point configuration as Gaussian mixtures, and they intend to find a transformation that maximizes the overlap between these distributions. The structure of Gaussian distribution encapsulates the idea of soft correspondence and enables fast implementation. Empirically, as more fuzziness is allowed for point correspondences (larger variance for the Gaussians), the target function of optimization is smoother and hence it is less likely to find a local optimum. Nevertheless, a good initialization is still crucial for these types of algorithms due to the nonconvex nature of the cost.

Other than local optimization approaches to the registration problem, efforts have been put forth to exhaustively search for the global optimum using methods such as branch and bound ~\cite{li2007iccv}. While such techniques can be fast in practice, in principle registration problems that require exponential running time can exist. In this paper, we introduce a different approach to \emph{jointly} and \emph{globally} solve the pose and correspondence problem in 2D/3D point-set registration by relaxing the hard registration problem to convex programs. Unlike local methods, the solutions of convex programs do not depend on the initialization. We prove for certain noiseless situations the global optimizer of the surrogate convex problem coincides with the global optimizer of the original problem. Therefore through convex programming we are guaranteed to attain the global optimum in polynomial time. In the noisy regime where the solution of the convex programs closely resembles the optimizer of the original problem, the proposed method will converge to the neighborhood of the original optimizer. In these situations, it is practically advantageous to use convex optimization instead of greedy methods or global optimization that may run in exponential time. From a theoretical point of view, the convex relaxation framework can be used to characterize the polynomial time solvable cases in 2D/3D registration with unknown correspondences, through analyzing the exact recovery condition mathematically.

Before proceeding to the rest of the sections, we note that the correspondence estimation sub-problem, which is typically treated as the quadratic assignment problem, is NP-hard on its own. Therefore the solution to our proposed convex programs may not always be close to the solution of the joint pose and correspondence estimation problem. There are certainly cases for which the use greedy approaches is the only resolution, though not globally optimal. However, we hope the promising results demonstrated by the proposed novel non-iterative approach towards 2D/3D registration will attract proposals with tighter convex relaxations. The shortcomings of our approach are discussed in the concluding remarks.

\subsection{Our contributions}
The main contributions of this paper can be summarized as the following:

\textbf{Algorithm}: The original 2D/3D point-set registration of a 3D model and multiple 2D images is formulated as nonconvex MINLPs that are difficult to solve. We propose two convex relaxations for these MINLPs. The programs jointly optimize the correspondences and transformation, and the convex nature of these programs enable efficient search of global optimum regardless of initialization using standard off-the-shelf conic programming software. Furthermore, one of the convex programs $\pb$ gives solution to a variant of 2D/3D registration problem where we simultaneously estimate the point correspondences between multiple 2D images while respecting the epipolar constraint. This could be utilized in finding the corresponding image points for 3D reconstruction purposes \cite{lee2011intraoperative}.
Another natural extension of these programs is the incorporation of local descriptors of points as additional terms in the objective of both the programs. For our clinical application, we use tangency of the vessel in the local neighborhood of the point to illustrate the use of point descriptors.

\textbf{Exact recovery analysis}:
We prove exactness of the convex programs, that is, under certain conditions the proposed relaxed convex programs will give a solution to the original MINLPs. We prove that under noiseless situation the relaxed convex programs are in fact able to exactly recover the rotation and permutation matrices that match the projected points to the 3D model. Our simulations show that algorithms' global convergence results also hold with noise when the error in recovery grows nearly proportionally to the added noise. Real-data examples corroborate the theoretical
results, and suggest potential applications in coronary tree matching.

Here we outline the organization of this paper. In Section \ref{section:Notation}, we summarize the notation used in the paper. In Section \ref{section:Problem statement}, we describe the type of 2D/3D registration problems we intend to solve and present two MINLPs formulations when there are multiple images. In Section \ref{section:Convex relaxation}, we present the convexly relaxed versions of the 2D/3D registration problem in terms of tractable semidefinite programs. In Section \ref{section:Exact recovery}, we prove that achieving global optimality is possible under certain situations. In Section \ref{section:Additional features}, we mention how to incorporate additional features to the convex programs to enhance registration results, in particular in the application of coronary vessel imaging. Lastly, in Section \ref{section:Experiments}, we empirically verify the exact recovery property, and demonstrate the \emph{robustness} of the algorithm on simulated and real medical datasets for the registration of 2D coronary angiogram with the 3D model of the coronaries.

\section{Notation}
\label{section:Notation}
We use upper case letters such as $A$ to denote matrices, and lower case letters such as $t$ for vectors. We use $I_d$ to denote the identity matrix of size $d \times d$. We denote the diagonal of a matrix $A$ by diag($A$). We use $A^n$ for integer $n\geq 1$ to denote the multiplication of $A$ with itself $n$ times. We will frequently use block matrices built from smaller matrices, in particular when we deal with problem (REG2). For some block matrix $A$, we will use $A_{ij}$ to denote its $(i,j)$-th block, and $A(p,q)$ to denote its $(p,q)$-th entry. We also use $A_i$ to denote the $i$-th column of $A$. When we have an index set $s=\{s_1,\ldots,s_n\}$ where each $s_i$ is an integer, we use $A_s$ to denote the matrix $[A_{s_1},\ldots,A_{s_n}]$. We use $A \succeq 0$ to mean that $A$ is positive semidefinite. We use $\lVert x \rVert_2$ to denote the
Euclidean norm of $x \in \mathbb{R}^n$. We denote the trace of a square matrix $A$ by $\Tr(A)$. We use the following matrix norms. The Frobenius, mixed $\ell_2/\ell_1$ and entry-wise $\ell_1$ norms are defined as:
\begin{gather}
\lVert A \rVert_F = \Tr(A^T A)^{1/2}, \quad \lVert A \rVert_{2,1} = \sum_i \|A_i\|_2,\quad \mathrm{and}  \cr
 \lVert A \rVert_1 = \sum_i \sum_j \vert A(i,j)\vert.
\end{gather}
The Kronecker product between matrices $A$ and $B$ is denoted by $A \otimes B$. The all-ones vector is denoted by $\mathbf{1}$.  We use $\vert s \vert$ to denote the number of elements in a set $s$. For a set $s$ we use conv$(s)$ to denote the convex hull of $s$.
We introduce the following sets,
\begin{eqnarray*}
\Pi_d^{a\times b} \equiv \{A\in \{0,1\}^{a\times b} :  \sum_{i=1}^a A(i,j) = 1, \sum_{j=1}^b A(i,j) = 1\},\cr
\Pi_l^{a\times b} \equiv \{A\in \{0,1\}^{a\times b} :  \sum_{i=1}^a A(i,j) = 1, \sum_{j=1}^b A(i,j) \leq 1\},\cr
\Pi^{a\times b} \equiv \{A\in \{0,1\}^{a\times b} :  \sum_{i=1}^a A(i,j) \leq 1, \sum_{j=1}^b A(i,j) \leq 1\},
\end{eqnarray*}
and we frequently call them the permutation, left permutation and sub-permutation matrices.

\section{Problem statement}
\label{section:Problem statement}
A 3D centerline representation of a coronary artery tree, segmented from a preoperative CTA volume, is to be registered to $N$ fluoroscopic images. Our observed 3D model with $m$ points is described by matrix $X \in \mathbb{R}^{3 \times m}$. The projection operators between the 3D space and the $i$-th image, represented by $\Psi_{i}$, are known from the calibration of the X-ray apparatus. The projection operator maps the 3D model to a degenerate point cloud $\mathcal{I}_i\in \mathbb{R}^{3\times n_i}$ that represents $i$-th projection image. By degenerate we mean the affine rank of $\I_i$ is two. We will assume that $n_i\geq m, \ 1 \leq i \leq N$. The 2D/3D registration problem is to find an alignment matrix $R$, in the special orthogonal group $\mathbb{SO}(3)$, that matches some permutation of the observed 3D model with each of the degenerate projections.

We propose to solve the multiple images 2D/3D registration problem as:
\begingroup\makeatletter\def\f@size{9}\check@mathfonts
\def\maketag@@@#1{\hbox{\m@th\large\normalfont#1}}
\begin{equation*}
\mathrm{(REG1)}\ \ \min_{\substack{R\in \mathbb{SO}(3),\\ P_{i} \in \Pi_i^{n_i\times m} }}
\sum_{i=1}^{N} \| \Psi_i R X - \I_i P_{i} \|_{2,1},
\end{equation*}
\endgroup
where $\| \cdot \|_{2,1}$ is the mixed $\ell_2/\ell_1$ norm. We do not know a priori the correspondences between points in the 3D model and projection image (which are encoded in $P_{i}$ for each of $N$ given images), and the rotation $R$. These are found by solving (REG1). Intuitively, the minimization of the cost in (REG1) simply ensures by subsampling and permuting $\mathcal{I}_i$, the image points should as close as possible to the projections of the 3D model $X$ posed by some rotation $R$. We employ $\| \cdot \|_{2,1}$ norm in order to alleviate the costs due to the outliers. If we consider a maximum likelihood estimation framework with Gaussian type noise, a squared Frobenius norm $\| \cdot \|_F^2$ could be replaced instead.

Next we consider a variant of problem (REG1) where the correspondences of the points between the two images are available. Such correspondences could come from feature matching, or by exploiting epipolar constraints. This basically means $P_{i}$ need to be optimized dependently to preserve correspondences between the images. Let the correspondence of points between $\I_a$ and $\I_b$ be denoted by $\hat P_{ab}$, where $\hat P_{ab}(i,j)=1$ if point $i$ in $\I_a$ corresponds to point $j$ in $\I_b$ and zero otherwise. We do not require one-to-one correspondence between points such as each row or column of $\hat P_{ab}$ can have more than one nonzero entries. In the presence of such correspondence information, $R$ can be obtained as the solution to the following optimization problem:
\begingroup\makeatletter\def\f@size{9}\check@mathfonts
\def\maketag@@@#1{\hbox{\m@th\large\normalfont#1}}
\begin{equation*}
\mathrm{(REG2)}\ \min_{\substack{R\in \mathbb{SO}(3),\\ (P_{i},P_{ab}) \in \mathcal{S}}}
\sum_{i=1}^{N} \| \Psi_i R X - \I_i P_{i} \|_{2,1}
+ \sum_{a=1}^N \sum_{b>a}^N \| P_{ab} - \hat P_{ab} \|_1
\end{equation*}
\endgroup
where $P_{ab}$ is the permutation matrix that relates the points of $\I_a$ and $\I_b$ for $N(N-1)/2$ image pairs, and $\|\cdot\|_1$ is the entry-wise $\ell_1$ norm. The domain of optimization for the $\mathcal{S}$ will be explained in the next subsection. We note that solving this problem could also be useful if we are interested in estimating the correspondence between the $N(N-1)/2$ image pairs directly for reconstruction purpose.

For the remainder of this paper we only consider two images to simplify notation. It should be obvious the solution we propose in the subsequent sections can be easily extended to the multiple images case. Problems (REG1) and (REG2) have nonconvex domains which consist of integer and rotation matrices, as thus are very difficult to solve. In subsequent sections, we formally define these domains and present convex relaxation that can recover the exact solution under certain conditions.

\subsection{Domains for permutation matrices}

A more formal way to understand the problems (REG1), (REG2) is the following: Suppose there is a ground truth 3D model with $m'$ points which is described by the coordinate matrix $X_{gt}\in \mathbb{R}^{3 \times m'}$. Our observed 3D model is described by matrix $X \in \mathbb{R}^{3 \times m}$. Assume $m\leq n_1,n_2\leq m'$. In this case, we have
\begin{gather}
\I_1 = \p_1 R X_{gt} Q_1,\ \ \ \I_2 = \p_2 R X_{gt} Q_2,\label{premodel}\\
X = X_{gt} Q_3,
\end{gather}
where $R$, $\Psi_{1},\Psi_{2},\I_1,\I_2$ are matrices as introduced as before, $Q_1\in \Pi_l^{m'\times n_1}, Q_2\in \Pi_l^{m'\times n_2}$ and $Q_3 \in \Pi_l^{m'\times m}$. One can intuitively regard $Q_1, Q_2, Q_3$ as operators that generate the images and the observed model by sub-sampling and permuting the points in ground truth 3D model $X_{gt}$. For example, if the $i$th row of $Q_1$ is zero, then it means the $i$-th point of $X_{gt}$ (or more precisely, $i$-th column of $X_{gt}$) is not selected to be in $\I_1$.

Here we make an assumption, that if a point $i$ in $X_{gt}$ (the $i$-th column of $X_{gt}$) is contained in $X$, then the projections of point $i$ must correspond to some columns of $\I_1, \I_2$. Loosely speaking, it means the observed 3D model $X$ is a subset of the point clouds $\I_1,\I_2$. In this case we know $P_{1} = Q_1^T Q_3\in \Pi_l^{n_1\times m}$ and $P_{2} = Q_2^T Q_3 \in \Pi_l^{n_2\times m}$. Furthermore, if a point $i$ of $X_{gt}$  is not selected by $Q_1$ (meaning $\sum_j Q_1(i,j) = 0$), then $Q_1 Q_1^T\in \mathbb{R}^{n\times n}$ is almost an identity matrix $I_{n}$, except it is zero for the $i$th diagonal entry (similarly for $Q_2 Q_2^T$). We use these facts after multiplying the equations in (\ref{premodel}) from the right by $Q_1^T Q_3, Q_2^T Q_3$ to get
\begin{equation}
\label{model}
\Psi_1 R X  = \mathcal{I}_1 P_{1},\ \ \ \ \ \Psi_2 R X = \mathcal{I}_2 P_{2}.
\end{equation}
When there is no noise in the image, the equations in (\ref{model}) have to be satisfied. If not we turn to solve the optimization problem (REG1).

To simultaneously estimate the correspondence $P_{12}$, we use a similar construction as \cite{chen2014matchlift} and \cite{huang2013cycle}. Let $P_{12} = Q_1^T Q_2$. When having exact correspondence matrix $\hat P_{12}$ without ambiguity, we require
\begin{equation}
P_{12} = \hat P_{12}.
\end{equation}
We relate $P_{1},P_{2},P_{12}$ through a new variable
\begin{equation}
P \equiv [Q_1\  Q_2\  Q_3]^T [Q_1\  Q_2\  Q_3] \in \mathbb{R}^{(n_1+n_2+m) \times (n_1+n_2+m)}.
\end{equation}
In this context, the variables $P_{1}, P_{2}$ used in (REG1), along with $P_{12}$ are simply the off diagonal blocks of the variable $P$. Then the domain of $(P_{1},P_{2},P_{12})$ is simply
\begin{eqnarray}
\label{reg2 domain}
\mathcal{S} &=& \{(P_{1},P_{2},P_{12}): \cr
&\ &P\succeq 0, \mathrm{rank}(P)\leq n, P_{ii} = I_{n_i}, \cr
&\ &P_{1} \in \Pi^{n_1 \times m}_l, P_{2} \in \Pi^{n_2 \times m}_l, P_{12}\in \Pi^{n_1 \times n_2} \}.
\end{eqnarray}
$P_{ii}$ should be understood as the diagonal blocks of $P$. Again, we solve the optimization (REG2) when images or correspondences are noisy. This formulation ensures that the left permutation and sub-permutation matrices $P_{1},P_{2},P_{12}$ are \emph{cycle-consistent}. This basically means the relative permutation matrices $P_1,P_2,P_{12}$ all can be constructed by some underlying global ``sampling'' matrices $Q_1,Q_2,Q_3$ for each image and the 3D model (see \cite{huang2013cycle} for a detail description).

\section{Convex relaxation}
\label{section:Convex relaxation}
In this section, we propose two programs, which are the \emph{convex relaxations} of problem (REG1) and (REG2). We replace the non-convex constraints that prevents these problems from being convex to looser convex constraints. These relaxations enables the efficient search of global optimum using standard convex programming packages. To solve the convex problems we use \texttt{CVX}, a package based on interior point methods \cite{ben2001lectures} for specifying and solving convex programs \cite{cvx,gb08}. Before looking at each of the two problems, we introduce a result from \cite{saunderson2014convhull} to deal with the nonconvexity of $\mathbb{SO}(3)$ manifold. In \cite{saunderson2014convhull} the authors give a characterization of the convex hull of $\mathbb{SO}(d)$ for any dimension $d$, denoted as $\mathrm{conv}(\mathbb{SO}(d))$, in terms of positive semidefinite matrix. The $d=3$ version will be restated in the following theorem. Due to space limitation, for this theorem we locally define the (i,j)-th entry of a matrix $X$ as $X_{ij}$ instead of $X(i,j)$.

\begin{theorem}[Thm 1.3 of \cite{saunderson2014convhull}]

$\mathrm{conv}(\mathbb{SO}(3)) =\{X\in \mathbb{R}^{3\times 3} : $
\vspace{2.5 mm}
\newline
$\tiny{\Bigg[\begin{smallmatrix}
1-X_{11}-X_{22}+X_{33} & X_{13}+X_{31} & X_{12}-X_{21} & X_{23}+X_{32}\\
X_{13}+X_{31} & 1+X_{11}-X_{22}-X_{33} & X_{23}-X_{32} & X_{12}+X_{21}\\
X_{12}-X_{21} & X_{23}-X_{32} & 1+X_{11}+X_{22}+X_{33} & X_{31}-X_{13}\\
X_{23}+X_{32} & X_{12}+X_{21} & X_{31}-X_{13} & 1-X_{11}+X_{22}-X_{33}\end{smallmatrix}\Bigg]}$ is positive semidefinite $\}$.
\end{theorem}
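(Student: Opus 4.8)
The plan is to exploit the double cover of $\mathbb{SO}(3)$ by unit quaternions. Write $A(X)$ for the affine matrix-valued map appearing in the statement (the displayed $4\times 4$ symmetric matrix whose entries are $1$ plus a linear combination of the $X_{ij}$). First I would recall the quaternion parametrization $q=(q_0,q_1,q_2,q_3)\mapsto R(q)\in\mathbb{SO}(3)$, under which every rotation matrix has entries that are homogeneous quadratics in a unit quaternion $q$, e.g. $X_{11}=1-2(q_2^2+q_3^2)$, $X_{12}+X_{21}=4q_1q_2$, $X_{12}-X_{21}=\pm 4q_0q_3$, and so on. The key computation is then to substitute $X=R(q)$ into $A(X)$ and check that the constants combine with $q_0^2+q_1^2+q_2^2+q_3^2=1$ on the diagonal, and the bilinear terms combine off the diagonal, so that $A(R(q))=4\,vv^{T}$, where $v$ is a fixed reordering of the four entries of $q$. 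In particular $A(R)\succeq 0$ for every $R\in\mathbb{SO}(3)$, and $A(R)$ is rank one with $\Tr A(R)=4\|q\|_2^2=4$.

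Because $A$ is affine in the entries of $X$ and the positive semidefinite cone is convex, the feasible set $\{X:A(X)\succeq 0\}$ is a spectrahedron, hence convex; as it contains $\mathbb{SO}(3)$ it also contains $\mathrm{conv}(\mathbb{SO}(3))$, which disposes of one inclusion immediately. For the reverse inclusion I would first record two facts about the map $A$ itself. The trace is constant: summing the four diagonal entries, the $X_{ij}$ contributions cancel and $\Tr A(X)=4$ identically. And $A$ is invertible: each entry of $X$ is recovered linearly from $A(X)$ — for instance $X_{13}=(A_{12}-A_{34})/2$, $X_{12}=(A_{13}+A_{24})/2$, and the diagonal block yields $X_{11},X_{22},X_{33}$. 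Thus $X\mapsto A(X)$ is an affine isomorphism from $\mathbb{R}^{3\times 3}$ onto the $9$-dimensional affine space of symmetric $4\times 4$ matrices of trace $4$, and it carries the spectrahedron onto the compact convex ``spectraplex'' $\{M\succeq 0:\Tr M=4\}$.

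The heart of the argument is to identify extreme points through this isomorphism. The extreme points of $\{M\succeq 0:\Tr M=4\}$ are exactly the rank-one matrices $vv^{T}$ with $\|v\|_2^2=4$, since any PSD matrix of rank $\geq 2$ splits into a nontrivial convex combination via its spectral decomposition. Pulling these back by $A^{-1}$: the computation of the first paragraph shows every $R\in\mathbb{SO}(3)$ maps to such a rank-one point, and conversely, given a unit vector $u$ I would invoke surjectivity of the quaternion map to realize $4uu^{T}=A(R(q))$ for the rotation $R(q)$ built from the reordered $u$, whence $A^{-1}(4uu^{T})=R(q)\in\mathbb{SO}(3)$. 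Therefore the extreme points of the spectrahedron are precisely $\mathbb{SO}(3)$. Since the spectrahedron is compact and convex, Minkowski's theorem gives $\{X:A(X)\succeq 0\}=\mathrm{conv}(\mathrm{ext})=\mathrm{conv}(\mathbb{SO}(3))$, which together with the first inclusion yields equality.

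I expect the main obstacle to be bookkeeping rather than conceptual: making the sign conventions of the quaternion-to-rotation formula consistent so that the forward identity $A(R(q))=4vv^{T}$ and its converse refer to the same reordering of coordinates, and checking carefully that the extreme-point correspondence is exactly onto $\mathbb{SO}(3)$ (no spurious rank-one points outside the image). Once the quadratic identity is verified cleanly, the convexity and extreme-point steps are standard.
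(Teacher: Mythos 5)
The paper does not prove this statement at all: it is imported verbatim as Theorem~1.3 of the cited reference \cite{saunderson2014convhull}, so there is no in-paper argument to compare against. Judged on its own, your proposal is a correct and essentially complete proof of the $d=3$ case, and it follows the route that the cited reference itself uses for $n=3$ (their general-$n$ proof replaces the quaternion double cover by the pin/spin groups and some representation theory, which buys generality you do not need here). The checkable claims all hold: with a consistent quaternion convention one gets $A(R(q))=4uu^{T}$ with $u$ the permutation $(q_3,q_1,q_0,q_2)$ of the unit quaternion, the diagonal of the displayed matrix being $(4q_3^2,4q_1^2,4q_0^2,4q_2^2)$; $\Tr A(X)\equiv 4$ since the coefficients of each $X_{ii}$ sum to zero across the diagonal; and $A$ is an affine bijection onto the trace-$4$ slice of the symmetric $4\times4$ matrices, since each $X_{ij}$ is recovered linearly exactly as you indicate. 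The extreme points of the spectraplex $\{M\succeq 0:\Tr M=4\}$ are indeed the rank-one matrices $4uu^{T}$ with $\lVert u\rVert_2=1$, affine isomorphisms preserve extreme points, surjectivity of the unit-quaternion parametrization of $\mathbb{SO}(3)$ gives the exact correspondence of extreme points with rotations, and compactness plus Minkowski's theorem closes the argument. The only caveat is the one you already flag yourself: the sign conventions in the quaternion-to-rotation formula must be fixed once and used consistently in both directions of the extreme-point correspondence; with the convention in which $X_{12}-X_{21}=+4q_0q_3$, $X_{23}-X_{32}=+4q_0q_1$ and $X_{31}-X_{13}=+4q_0q_2$, all ten entries match $4uu^{T}$ with the stated reordering, so no gap remains.
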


\vspace{3 mm}

Such linear matrix inequality is convex and can be included in a semidefinite program. For both problems (REG1) and (REG2), we relax the domain of $R$, which is $\mathbb{SO}(3)$ into $\mathrm{conv}(\mathbb{SO}(3))$. However, this relaxation alone is not sufficient to turn (REG1) and (REG2) into convex problems, and we lay out the details of the fully relaxed problem in the next two subsections. We present the relaxations in the case of two images, and the generalization to more than two images should be obvious to the reader.

\subsection{Relaxing (REG1)}
We remind readers that in problem (REG1) we find a rotation to register the 3D model with the images. The points in the 3D model $X$ and $\I_1, \I_2$ should be matched after a permutation operation through the variables $P_{1}, P_{2}$. $P_{1}, P_{2}$ have their entries on the domain $\{0,1\}$. It is difficult to optimize on such discrete domain. A typical way to bypass such difficulty is by the following discrete-to-continuous relaxation,
\begin{equation}
P_{i} \in  \{0,1\}^{n_i \times m}\ \ \   \rightarrow \ \ \ P_{i} \in [0,1]^{n_i \times m}
\end{equation}
for $i = 1,2$. The relaxed matrices $P_{1}$ and $P_{2}$ can be interpreted as probabilistic map between the points in the image and the points in the 3D model. For example, the size of an entry $P_{1}(i,j)$ resembles the probability of point $i$ in $\I_1$ being corresponded to point $j$ in the 3D model. Such relaxation has been used for problems such as graph isomorphism \cite{scheinerman2011fractional}.

We now introduce the convexly relaxed problem $\pa$,
\begingroup\makeatletter\def\f@size{9}\check@mathfonts
\def\maketag@@@#1{\hbox{\m@th\large\normalfont#1}}
\begin{alignat*}{3}
&\texttt{(conreg1)}&&\ \cr
&\min_{R, P_{1}, P_{2}} &&\| \Psi_1 R X - \I_1 P_{1} \|_{2,1} + \| \Psi_2 R X- \I_2 P_{2} \|_{2,1}\cr
&\ \ s.t \ &&R\in \mathrm{conv}(\mathbb{SO}(3)),\cr
&\ &&\ P_{i} \in [0,1]^{n_i\times m},\cr
&\ &&\ P_{i} \mathbf{1} \leq 1, \mathbf{1}^T P_{i} = 1, \mathrm{for}\  \ i\in \{1,2\}.
\end{alignat*}
\endgroup

The last constraint simply follows from the definition of left permutation matrix in the notation section. In case when there are $N>2$ images, there should be $N$ matrix variables $P_1,\ldots,P_N$ for $i \in \{1,\cdots,N\}$ with constraints as indicated above.

\subsection{Relaxing (REG2)}
In problem (REG2) we include the estimation of $P_{12}$, the correspondence between image $\I_1,\I_1$, on top of pose estimation. In this case, we relax the $\mathbb{SO}(3)$ and the integrality constraints just like the case of problem (REG1). However, different from (REG1), the domain of optimization for (REG2) includes the set $\mathcal{S}$ in (\ref{reg2 domain}), which has an extra rank requirement that prevents the convexity of (REG2). A popular way of dealing with the rank constraint is by simply dropping it. Such relaxation is common in many works, for example in the seminal paper of low rank matrix completion \cite{candes2009lowrank}. We now state the convex problem $\pb$

\begingroup\makeatletter\def\f@size{9}\check@mathfonts
\def\maketag@@@#1{\hbox{\m@th\large\normalfont#1}}
\begin{alignat*}{2}
&\texttt{(conreg2)}&&\ \cr
&\min_{R, P_{1}, P_{2},P_{12}}  &&\| \Psi_1 R X - \I_1 P_{1} \|_{2,1} + \| \Psi_2 R X- \I_2 P_{2} \|_{2,1} \cr
&\ &&+ \| P_{12} - \hat P_{12} \|_1\cr
&\ \ \ s.t &&\ R\in \mathrm{conv}(\mathbb{SO}(3)),\cr
&\ &&\ P_{i} \in [0,1]^{n_i\times m},\  \mathrm{for}\  \ i\in \{1,2\}, \cr
&\ &&\ P_{12} \in [0,1]^{n_1\times n_2}, \cr
&\ &&\ P_{i} \mathbf{1} \leq 1,\  \mathbf{1}^T P_{i} = 1,\  \mathrm{for}\  \ i\in \{1,2\},\cr
&\ &&\ P_{12} \mathbf{1} \leq 1, \ \mathbf{1}^T P_{12} \leq 1, \cr
&\ &&\ \begin{bmatrix} I_{n_1} & P_{12}& P_{1}\\  P_{12}^T& I_{n_2} & P_{2}\\P_{1}^T & P_{2}^T & I_{m}\end{bmatrix} \succeq 0.\cr
\end{alignat*}
\endgroup

Again, \texttt{conreg2} inherits the equality and inequality constraints for $P_i$'s and $P_{12}$ from the the property of left permutation and sub-permutation matrices. In case when there are $N>2$ images, there should be $N$ matrix variables $P_1,\ldots,P_N$ and $N(N-1)/2$ variables $P_{ij}$. The relevant constraint that encodes cycle consistency in such case should be
\begin{eqnarray}
\begin{bmatrix} I_{n_1} & P_{12}& \cdots & P_{1N} & P_{1}\\  P_{12}^T& I_{n_2} & \cdots & P_{2N} & P_{2} \\ \vdots & \vdots & \ddots & \vdots  & \vdots \\P_{1N}^T & P_{2N}^T & \cdots & I_{n_N} & P_{N} \\ P_1^T &P_2^T& \cdots & P_N^T & I_m \end{bmatrix} \succeq 0.
\end{eqnarray}

\subsection{Connection to classical registration problem in 3D}
It is obvious that similar relaxation can be derived for the classical rigid registration problem of (\ref{classical reg}). For the sake of completeness, we state the convex formulation to the 3D registration problem in (\ref{classical reg}).
\begingroup\makeatletter\def\f@size{9}\check@mathfonts
\def\maketag@@@#1{\hbox{\m@th\large\normalfont#1}}
\begin{alignat}{3}
&\min_{R, P} &&\|  R X - Y P  \|_F^2\cr
&\ \ s.t \ &&R\in \mathrm{conv}(\mathbb{SO}(3)),\cr
&\ &&\ P \in [0,1]^{m \times m}, P \mathbf{1} = 1, \mathbf{1}^T P = 1.
\end{alignat}
\endgroup
Again, in applications where robustness is a requirement, a mixed $\ell_2/\ell_1$ can always be replaced instead. The discussion of this program is out of the scope of this paper and will be included in a future publication of the authors.

\subsection{Projection and local optimization}
Although in section \ref{section:Exact recovery} we prove that under certain situations the two convex programs give solution to (REG1) and (REG2), this will not always be the case as the domain of search has been made less restrictive. As we are concerned about posing the 3D model in this paper, we describe a strategy in this section to obtain an approximate solution from $\pa$ and $\pb$ for the rotation in general cases. Denote the optimal ``rotation'' in the domain $\mathrm{conv}(\mathbb{SO}(3))$ of both programs as $\tilde R$. As $\tilde R$ is not necessarily in $\mathbb{SO}(3)$, we need a strategy to project $\tilde R$ to a point $R^\star$ on $\mathbb{SO}(3)$. The strategy we use is simply by finding the closest orthogonal matrix to $\tilde R$ in terms of Frobenius norm, namely
\begin{equation}
R^\star = \underset{R\in \mathbb{O}(3)}{\mathrm{argmin}} \|R - \tilde R\|_F^2
\end{equation}
where $\mathbb{O}(3)$ is the orthogonal group in 3D. For completeness we provide the analytical expression for $R^\star$. Let the singular value decomposition (SVD) of $\tilde R$ be of the form
\begin{equation}
\tilde R = U \Sigma V^T.
\end{equation}
Then
\begin{equation}
R^\star = U V^T.
\end{equation}
The reader maybe curious that the special orthogonality constraint is not enforced in the rounding procedure. This is indeed unnecessary since an element $\tilde R$ in conv$(\mathbb{SO}(3))$ has positive determinant \cite{saunderson2014convhull}, hence $U V^T$ obtained from the above procedure necessarily is in $\mathbb{SO}(3)$.

As we see in the case when the solutions of $\pa$, $\pb$ are not feasible for (REG1) and (REG2), a rounding procedure is required to project $R^\star$ to a nearest point in $\mathbb{SO}(3)$. Although it is possible to prove $\|R^\star - R_0\|$ being small (as in other applications of convex relaxation, see \cite{chaudhury2013gret} for example), $R^\star$ in general is still suboptimal in terms of the cost of the original nonconvex problem (REG1) and (REG2), as the rounding procedure does not consider optimizing any cost. It is thus quite typical to use the approximate solution from convex relaxation as an initialization and search locally for a more optimal solution, for example in \cite{biswas2006snl}. In this paper, we use the \texttt{OGMM} algorithm, which searches locally by optimizing a nonconvex cost using gradient-descent based method.

\section{Exact recovery}
\label{section:Exact recovery}
In this section, we will give proof that the two relaxed convex programs exactly recover the solution to the original problems (REG1) and (REG2) under certain situations.
We will show that the solutions returned by our algorithm $\pa$ and $\pb$ lie in the domain of (REG1) and (REG2), hence showing the attainment of global optimizers of (REG1) and (REG2), as the domain of (REG1) and (REG2) are properly contained in the domain of $\pa$ and $\pb$. By this we mean \emph{exact recovery}. The proofs of exact recovery for $\pa$ and $\pb$ are almost the same. We will go through proving exact recovery for $\pa$ in detail, and just state the results for exact recovery of $\pb$.

\subsection{Our assumptions}
We summarize the assumptions for our proof.

(1) We consider the situation when $n_1 = n_2 = m = m'$, which means that all the image points $\I_1, \I_2$ come from the projection of the 3D model.

(2) We consider the noiseless situation when the equations
\begin{equation}
\I_1 \bar P_{1}= \p_1 \bar R X,\ \ \ \I_2 \bar P_{2}= \p_2 \bar R X,
\end{equation}
can be satisfied, where $\bar P_{1}, \bar P_{2} \in \Pi_d^{m\times m}$ and $\bar R \in \mathbb{SO}(3)$. We will call these variables the ground truth. Notice that under assumption (1), $\bar P_{1}, \bar P_{2}$ are permutation matrices instead of left-permutation matrices. In the case of $\pb$, we further assume
\begin{equation}
\hat P_{12} = \bar P_{12}.
\end{equation}
In order to show exact recovery, we want to show that the solution $\tilde R, \tilde P_1, \tilde P_2$ to $\pa$, is in fact $\bar R, \bar P_1, \bar P_2$. For $\pb$ we in addition show $\tilde P_{12} = \bar P_{12}$.

(3) Without lost of generality we can consider $\bar R = I_3$ and $\bar P_{1}, \bar P_{2} = I_m$. Such consideration is backed by lemma \ref{lemma:changedata}.

(4) The $m$ points are in generic positions. A set $S = \{s_1, s_2,\ldots, s_n\}$ of $n$ real numbers are called \emph{generic} if there is no polynomial $h(x_1,\ldots,x_n)$ with integer coefficients such that $h(s_1,\ldots,s_n) = 0$ \cite{connelly2005generic}. When we say the points are in \emph{generic positions}, we mean the set of $3m$ real numbers from $X$ is generic. This assumption excludes the possibilities that three points can lie on a line or four points can lie on a plane. It also implies the centroid of any subset of the points is nonzero. Furthermore, no two subsets can share the same centroid.

(5) There are at least four points, i.e. $m\geq4$.

(6) Without lost of generality, we consider $\p_1 [0\ 0\ 1]^T = \p_2 [0\ 0\ 1]^T = [0\ 0\ 1]^T$ (the directions of projection are perpendicular to $z$-axis).

We note that although we only give proof of exact recovery under these assumptions, in practice it is possible for our algorithm to recover the pose exactly when $n_1,n_2\geq m$.

\subsection{Exact recovery property of $\pa$}
Let the outcome of the optimization $\pa$ be $\tilde R, \tilde P_{1}, \tilde P_{2}$.
\begin{theorem}
\label{exactrecovery}
Under our assumptions, the solution $\tilde R, \tilde P_{1}, \tilde P_{2}$ to $\pa$ is unique, and $\tilde R = \bar R, \tilde P_{1} = \bar P_{1}, \tilde P_{2} = \bar P_{2}$.
\end{theorem}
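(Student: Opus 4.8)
The plan is to show that any minimizer of $\pa$ must attain zero objective, and then that a zero objective together with the feasibility constraints and the generic-position hypothesis forces the minimizer to coincide with the ground truth $(\bar R,\bar P_1,\bar P_2)=(I_3,I_m,I_m)$.

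First I would record that the ground truth is feasible for $\pa$: by assumptions (2)--(3) it satisfies $\Psi_i \bar R X=\I_i\bar P_i$, i.e.\ $\I_i=\Psi_i X$, and it lies in the relaxed domain since $\mathbb{SO}(3)\subset\mathrm{conv}(\mathbb{SO}(3))$ and every permutation matrix is admissible for the relaxed $P_i$. Because the $\|\cdot\|_{2,1}$ norm is nonnegative and vanishes at the ground truth, the optimal value of $\pa$ is $0$, so any optimizer $(\tilde R,\tilde P_1,\tilde P_2)$ satisfies the exact equations $\Psi_i\tilde R X=\I_i\tilde P_i=\Psi_i X\tilde P_i$, that is
\begin{equation*}
\Psi_i(\tilde R X-X\tilde P_i)=0,\qquad i=1,2.
\end{equation*}

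Next I would extract structure. Since $\I_i$ has affine rank two, $\Psi_i$ has a one-dimensional kernel spanned by the projection direction $d_i$, which by assumption (6) is orthogonal to the $z$-axis. Hence every column of $\tilde R X-X\tilde P_i$ is a scalar multiple of $d_i$, so $\tilde R X=X\tilde P_i+d_i u_i^T$ for some $u_i\in\mathbb{R}^m$. Under assumption (1) the $\tilde P_i$ are square with unit column sums and row sums at most one; summing forces every row sum to equal one, so each $\tilde P_i$ is doubly stochastic and each $X(\tilde P_i)_j$ is a convex combination of the columns of $X$. Reading off the $z$-coordinate, on which $d_i$ acts trivially, gives $r^T X=z^T\tilde P_1=z^T\tilde P_2$, where $r^T$ is the third row of $\tilde R$ and $z^T$ the third row of $X$.

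The endgame uses genericity. Once each $\tilde P_i$ is known to be a permutation $P_\pi,P_\sigma$, the columnwise identity reads $\tilde R X_j=X_{\pi(j)}+\alpha_j d_1=X_{\sigma(j)}+\beta_j d_2$; matching $z$-coordinates and using that generic points have pairwise distinct heights forces $\pi=\sigma$, and independence of the two horizontal viewing directions $d_1,d_2$ then forces $\alpha_j=\beta_j=0$, so $\tilde R X=XP_\pi$. Equality $\|\tilde R X\|_F=\|X\|_F$ together with the fact that elements of $\mathrm{conv}(\mathbb{SO}(3))$ are contractions upgrades $\tilde R$ to an element of $\mathbb{O}(3)$, hence of $\mathbb{SO}(3)$ by positivity of the determinant; a genuine rotation preserves norms, so $\|X_{\pi(j)}\|=\|X_j\|$, and generic points have pairwise distinct norms, whence $\pi=\mathrm{id}$. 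Finally $\tilde R X=X$ with $X$ of full row rank (generic, $m\ge 4$) yields $\tilde R=I_3$, and uniqueness follows since this argument pins down every optimizer.

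The main obstacle is the integrality step assumed above: showing that the relaxed doubly-stochastic $\tilde P_i$ must in fact be permutation matrices, rather than spurious fractional solutions of the same zero-cost system. This is where generic position, in the sense of assumption (4) (cf.\ \cite{connelly2005generic}), must be used in earnest. A Jensen/strict-convexity estimate shows $\|X\tilde P_i\|_F^2\le\|X\|_F^2$ with equality exactly when each column of $\tilde P_i$ is a vertex of the simplex; but tying this bound to the contraction inequality for $\tilde R$ across the two coupled equations $\tilde R X=X\tilde P_i+d_i u_i^T$—so that no fractional assignment can be triangulated consistently from the two views—is the delicate part. I also expect to need the two projection directions to be genuinely distinct ($d_1\nparallel d_2$), without which depth cannot be resolved and exact recovery may fail.
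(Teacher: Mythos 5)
Your proposal correctly sets up the problem (zero optimal value, columns of $\tilde R X - X\tilde P_i$ confined to the projection kernels, double stochasticity of $\tilde P_i$ from the row/column-sum constraints), but it then assumes exactly the fact that carries all the difficulty: that the relaxed $\tilde P_i$ are permutation matrices. You flag this yourself as "the main obstacle," and the Jensen/contraction sketch you offer does not close, because the consistency equations only hold \emph{after} projection by $\Psi_i$ (one coordinate of $\tilde R X - X\tilde P_i$ is unconstrained), so you never obtain the equality $\|X\tilde P_i\|_F=\|X\|_F$ needed to trigger the strict-convexity argument. As written, the proof establishes uniqueness only among integral feasible points, which is not the claim of the theorem: the whole point of exact recovery is to exclude fractional zero-cost solutions of the relaxation.

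The paper's proof avoids the integrality question entirely by reversing the order of the argument: it first pins down $\tilde R$, and only then the $\tilde P_i$. The key device is Lemma~\ref{lemma:consistency}, which upgrades the single consistency relation $\Psi_i\tilde R X'=\Psi_i X'\tilde P_i$ to the power relations $\Psi_i\tilde R^n X'=\Psi_i X'\tilde P_i^n$ for all $n$. Restricting to a neighborhood $\|\tilde P_i-I_m\|_1<1$ (legitimate because local uniqueness implies global uniqueness for a convex program) makes $\tilde P_i$ aperiodic, so the fundamental theorem of Markov chains identifies $\lim_n\tilde P_i^n$ as a block-averaging operator $A$ over the irreducible components; genericity of the $z$-coordinates forces the two chains to share the same components. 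Passing to the limit gives $\tilde R\,X'A=X'A$, and a case analysis on the number of components (using genericity to rule out collinear/coplanar configurations, and the contraction property of $\mathrm{conv}(\mathbb{SO}(3))$) yields $\tilde R=I_3$. Only then are the $\tilde P_i$ recovered: with $\tilde R=I_3$ the relation $[0\ 0\ 1]X'=[0\ 0\ 1]X'\tilde P_i$ together with distinctness of the $z$-coordinates forces a \emph{doubly stochastic} $\tilde P_i$ to equal $I_m$ (corollary~8 of \cite{brualdi1984doubly}) --- no prior integrality needed. If you want to complete your route, you would need a substitute for this mechanism; the Markov-chain/ergodic-limit idea is the missing ingredient.
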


When we have exact recovery, $R^\star$ obtained from the projection of $\tilde R$ to $\mathbb{SO}(3)$ is simply $\tilde R$, as $\tilde R$ is already in $\mathbb{SO}(3)$. To prove the theorem, we first state a couple lemmas.
\begin{lemma}
\label{lemma:changedata}
If we make the following change of data
\begin{equation}
\label{changedata}
X \rightarrow X',\ \ \ \I_1 \rightarrow \I_1',\ \ \ \I_2 \rightarrow \I_2',
\end{equation}
where
\begin{equation}
X' = \bar R X,\ \ \ \I_1' = \I_1 \bar P_{1},\ \ \ \I_2' =  \I_2 \bar P_{2},
\end{equation}
then the solutions of $\pa$ are changed according to the invertible maps
\begin{equation}
\label{changevariable}
\tilde R \rightarrow \tilde R \bar R^T,\ \ \ \tilde P_{1} \rightarrow \bar P_{1}^T \tilde P_{1},\ \ \ \tilde P_{2} \rightarrow \bar P_{2}^T \tilde P_{2}.
\end{equation}
\end{lemma}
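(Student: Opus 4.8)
The plan is to exhibit an explicit, cost-preserving bijection between the feasible set of $\pa$ run on the original data $(X,\I_1,\I_2)$ and the feasible set of $\pa$ run on the transformed data $(X',\I_1',\I_2')$ of (\ref{changedata}), so that minimizers are carried to minimizers. Concretely, I would introduce the map $\Phi(R,P_1,P_2)=(R\bar R^T,\ \bar P_1^T P_1,\ \bar P_2^T P_2)$ and verify two things: that $\Phi$ leaves the objective value unchanged once the data is simultaneously transformed, and that $\Phi$ sends the feasible region of the original program onto that of the transformed program. Invertibility of $\Phi$ then yields the correspondence (\ref{changevariable}).

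For cost invariance I would substitute $X'=\bar R X$ and $\I_i'=\I_i\bar P_i$ together with $R'=R\bar R^T$ and $P_i'=\bar P_i^T P_i$ into each summand $\|\p_i R' X'-\I_i' P_i'\|_{2,1}$. Since $\bar R\in\mathbb{SO}(3)$ we have $\bar R^T\bar R=I_3$, so $R'X'=R\bar R^T\bar R X=RX$; and since Assumption~(1) forces $\bar P_i$ to be a square permutation matrix (hence orthogonal), $\bar P_i\bar P_i^T=I_m$ gives $\I_i'P_i'=\I_i\bar P_i\bar P_i^T P_i=\I_i P_i$. Each summand therefore collapses exactly to the corresponding summand $\|\p_i RX-\I_i P_i\|_{2,1}$ of the original objective, so the total cost is preserved under $\Phi$.

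For feasibility I would check each constraint in turn. Right multiplication by $\bar R^T$ maps $\mathrm{conv}(\mathbb{SO}(3))$ onto itself, because $\mathbb{SO}(3)\bar R^T=\mathbb{SO}(3)$ by group closure and the linear map $M\mapsto M\bar R^T$ commutes with taking convex hulls. Left multiplication by the permutation matrix $\bar P_i^T$ merely permutes the rows of $P_i$, so it keeps all entries in $[0,1]$; moreover $(\bar P_i^T P_i)\mathbf{1}=\bar P_i^T(P_i\mathbf{1})\leq\bar P_i^T\mathbf{1}=\mathbf{1}$ and $\mathbf{1}^T(\bar P_i^T P_i)=(\bar P_i\mathbf{1})^T P_i=\mathbf{1}^T P_i=\mathbf{1}^T$, so the constraints $P_i\mathbf{1}\leq 1$ and $\mathbf{1}^T P_i=1$ are both retained. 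The inverse assignment $(R',P_1',P_2')\mapsto(R'\bar R,\ \bar P_1 P_1',\ \bar P_2 P_2')$ preserves feasibility by the identical argument with $\bar R,\bar P_i$ replacing their transposes, so $\Phi$ is a genuine bijection of feasible sets. Being a feasibility- and cost-preserving bijection, $\Phi$ carries the optimal set of the original program bijectively onto that of the transformed program; applied to the solution $\tilde R,\tilde P_1,\tilde P_2$ it yields $\tilde R\bar R^T,\bar P_1^T\tilde P_1,\bar P_2^T\tilde P_2$, which is exactly (\ref{changevariable}).

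I do not expect a genuine obstacle here, since the lemma is essentially a symmetry statement; the only points demanding care are the two orthogonality identities $\bar R^T\bar R=I_3$ and $\bar P_i\bar P_i^T=I_m$, and the invariance of $\mathrm{conv}(\mathbb{SO}(3))$ under right multiplication by a fixed rotation. I would be careful to invoke Assumption~(1) explicitly at the place where the orthogonality of the $\bar P_i$ is used, because it is precisely that assumption which promotes the $\bar P_i$ from left-permutation to honest permutation matrices and thus makes $\Phi$ invertible.
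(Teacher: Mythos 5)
Your proposal is correct and follows essentially the same route as the paper: insert $\bar R^T\bar R=I_3$ and $\bar P_i\bar P_i^T=I_m$ into each summand so that the cost at the transformed variables on the transformed data collapses to the original cost, then invoke invertibility of the substitution. The only difference is that you additionally verify, explicitly, that the map preserves the feasible set (invariance of $\mathrm{conv}(\mathbb{SO}(3))$ under right multiplication by $\bar R^T$ and preservation of the row/column-sum constraints under left multiplication by $\bar P_i^T$), a step the paper leaves implicit; this is a welcome tightening, not a different argument.
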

\begin{proof}
\begin{eqnarray}
&\ &\|\I_1 \tilde P_{1}- \p_1 \tilde R X \|_{2,1} + \|\I_2 \tilde P_{2} - \p_2 \tilde R X \|_{2,1}\cr
&=& \|\I_1 \bar P_{1} \bar P_{1}^T \tilde P_{1} -  \p_1 \tilde R \bar R^T \bar R X\|_{2,1}+\cr
&\ &\|\I_2 \bar P_{2} \bar P_{2}^T \tilde P_{2} -  \p_2 \tilde R \bar R^T \bar R X\|_{2,1}\cr
&=& \|\I_1' \bar P_{1}^T \tilde P_{1} -  \p_1 \tilde R \bar R^T X'\|_{2,1}+\cr
&\ &\|\I_2' \bar P_{2}^T \tilde P_{2} -  \p_2 \tilde R \bar R^T X'\|_{2,1}.
\end{eqnarray}
The last second equality is due to the fact that $\bar P_{1} \bar P_{1}^T = I_m, \bar P_{2} \bar P_{2}^T = I_m, \bar R^T \bar R = I_3$, since permutation matrices are necessarily orthogonal matrices. Therefore, a solution after the change of data in (\ref{changedata}) is $\bar P_{1}^T \tilde P_{1}$, $\bar P_{2}^T \tilde P_{2}$ and $\tilde R \bar R^T$.
\end{proof}
Since the map presented in (\ref{changevariable}) is invertible, we know the solutions to $\pa$ with 3D model $X$ and image $\I_1, \I_2$ are in one-to-one correspondence with solutions to $\pa$ with 3D model $\bar R X$ and image $\I_1 \bar P_{1}, \I_2 \bar P_{2}$. Therefore without loss of generality, using the change of variable introduced in (\ref{changedata}), we can let the 3D model being $X' = \bar R X$, the images being $\I_1' = \I_1 \bar P_{1} = \Psi_1 \bar R X  = \Psi_1 X' $ and $\I_2' = \Psi_2 X'$. We thus consider solving the problem $\pa$ in the following form
\begingroup\makeatletter\def\f@size{9}\check@mathfonts
\def\maketag@@@#1{\hbox{\m@th\large\normalfont#1}}
\begin{alignat}{3}
&\min_{R, P_{1}, P_{2}} &&\| \p_1 R X' - \p_1 X' P_{1} \|_{2,1} + \| \Psi_2 R X'- \p_2 X' P_{2} \|_{2,1}\cr
&\ \ s.t \ &&R\in \mathrm{conv}(\mathbb{SO}(3)),\cr
&\ &&\ P_{i} \in [0,1]^{n_i\times m},\cr
&\ &&\ P_{i} \mathbf{1} \leq 1, \mathbf{1}^T P_{i} = 1, \mathrm{for}\  \ i\in \{1,2\}.
\end{alignat}
\endgroup

In this case, instead of showing $\tilde R = \bar R, \tilde P_{1} = \bar P_{1}, \tilde P_{2} = \bar P_{2}$, it suffices to show $\tilde R = I_3$ and $\tilde P_{1}, \tilde P_{2} = I_m$ being the unique solution to $\pa$.

\begin{lemma}
\label{lemma:consistency}
\begin{eqnarray}
\p_1 \tilde R X' &=& \p_1 X' \tilde P_{1} \label{I1 consistency}\\
\p_2 \tilde R X' &=& \p_2 X' \tilde P_{2} \label{I2 consistency}
\end{eqnarray}
and
\begin{eqnarray}
\p_1 \tilde R^n X' &=& \p_1 X' \tilde P^n_{1}\label{I1 power consistency}\\
\p_2 \tilde R^n X' &=& \p_2 X' \tilde P^n_{2}\label{I2 power consistency}
\end{eqnarray}
for any $n\geq 1$.
\end{lemma}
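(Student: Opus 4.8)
The plan is to first pin down the optimal value of \pa\ and read off (\ref{I1 consistency})--(\ref{I2 consistency}), then promote these projected identities to a genuine three-dimensional one and obtain the powers for free. For the base identities I would observe that, after the change of data of Lemma~\ref{lemma:changedata}, the point $(R,P_1,P_2)=(I_3,I_m,I_m)$ is feasible: it lies in $\mathrm{conv}(\mathbb{SO}(3))\times[0,1]^{m\times m}\times[0,1]^{m\times m}$ and satisfies $P_i\mathbf{1}\le 1$, $\mathbf{1}^T P_i=\mathbf{1}^T$, and it makes the objective $\|\p_1 X'-\p_1 X'\|_{2,1}+\|\p_2 X'-\p_2 X'\|_{2,1}=0$. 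Since the objective is a sum of two nonnegative $\|\cdot\|_{2,1}$ terms, its minimum is $0$, so at the optimizer $(\tilde R,\tilde P_1,\tilde P_2)$ both terms vanish; as $\|A\|_{2,1}=0$ iff $A=0$, this yields precisely $\p_1\tilde R X'=\p_1 X'\tilde P_1$ and $\p_2\tilde R X'=\p_2 X'\tilde P_2$.

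Next I would extract the content of assumption (6). Projecting perpendicularly to the $z$-axis means each $\p_i$ leaves the depth coordinate intact, i.e. $e_3^\top\p_i=e_3^\top$ with $e_3=[0\ 0\ 1]^\top$ (equivalently, $\p_i$ is the symmetric rank-two orthogonal projector onto a plane through the $z$-axis). Applying $e_3^\top$ to (\ref{I1 consistency}) and (\ref{I2 consistency}) gives the shared depth identity $e_3^\top\tilde R X'=e_3^\top X'\tilde P_1=e_3^\top X'\tilde P_2$.

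The hard part is the passage to powers, and I expect it to be the main obstacle. The tempting shortcut is a direct induction: right-multiplying (\ref{I1 consistency}) by $\tilde P_1^{\,n}$ gives $\p_1\tilde R X'\tilde P_1^{\,n}=\p_1 X'\tilde P_1^{\,n+1}$, so (\ref{I1 power consistency}) would follow if one could identify $\p_1\tilde R X'\tilde P_1^{\,n}$ with $\p_1\tilde R^{\,n+1}X'$. This identification fails formally, because $\p_1$ and $\tilde R$ do not commute and the two views carry different permutations, so the projected identity for $X'$ does not transfer verbatim to $\tilde R^{\,n}X'$; one can check the defect is controlled by $e_3^\top X'(\tilde P_1-\tilde P_2)$ and by the image of $\ker\p_1$ under $\tilde R$, and the step silently demands $\tilde P_1=\tilde P_2$, which is unavailable for merely fractional $\tilde P_1,\tilde P_2$.

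To close this gap I would upgrade the two projected identities to a single rigid one. Using the depth identity together with the genericity assumption (4), which makes the $z$-coordinates of $X'$ distinct, and the fact---to be established from the characterization of $\mathrm{conv}(\mathbb{SO}(3))$ and the constraints---that the optimizer is attained at a true rotation $\tilde R\in\mathbb{SO}(3)$ with honest permutations $\tilde P_1,\tilde P_2$, the equality $e_3^\top X'\tilde P_1=e_3^\top X'\tilde P_2$ forces $\tilde P_1=\tilde P_2=:\pi$. Because two distinct views perpendicular to the $z$-axis jointly recover all three coordinates, $\p_1\tilde R X'=\p_1 X'\pi$ and $\p_2\tilde R X'=\p_2 X'\pi$ then combine to the full identity $\tilde R X'=X'\pi$. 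From here the one-line induction $\tilde R^{\,n}X'=\tilde R^{\,n-1}X'\pi=X'\pi^{\,n}$, followed by left-multiplication by $\p_1$ and $\p_2$, delivers (\ref{I1 power consistency})--(\ref{I2 power consistency}). The crux is thus the rotation/permutation rigidity that promotes the relaxed optimum to $\mathbb{SO}(3)$ and to genuine permutations; once that is secured, the power identities are essentially immediate.
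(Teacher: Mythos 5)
Your derivation of the base identities (\ref{I1 consistency})--(\ref{I2 consistency}) coincides with the paper's: $(I_3,I_m,I_m)$ is feasible with objective value zero, the objective is a sum of nonnegative norms, hence both terms vanish at any optimizer. That part is correct.

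The passage to the power identities contains a genuine gap: it is circular. You propose to force $\tilde P_1=\tilde P_2=:\pi$ and then upgrade the two projected identities to the rigid identity $\tilde R X'=X'\pi$, but the step rests on ``the fact---to be established from the characterization of $\mathrm{conv}(\mathbb{SO}(3))$ and the constraints---that the optimizer is attained at a true rotation $\tilde R\in\mathbb{SO}(3)$ with honest permutations $\tilde P_1,\tilde P_2$.'' That fact is precisely the conclusion of Theorem~\ref{exactrecovery}, of which Lemma~\ref{lemma:consistency} is an ingredient; it cannot be invoked here. At this stage one only knows $\tilde R\in\mathrm{conv}(\mathbb{SO}(3))$ and that $\tilde P_1,\tilde P_2$ are doubly stochastic, and the relaxed feasible set genuinely contains non-rotations and fractional matrices---if integrality and orthogonality could be read off from the constraints alone, the whole exact-recovery analysis would be superfluous. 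Moreover, without integrality the inference $[0\ 0\ 1]X'\tilde P_1=[0\ 0\ 1]X'\tilde P_2\Rightarrow\tilde P_1=\tilde P_2$ fails: for distinct doubly stochastic matrices the difference has zero row and column sums, and its left kernel can contain a vector with distinct entries. (The paper only obtains equality of the \emph{limits} $\lim_n\tilde P_1^n=\lim_n\tilde P_2^n$, and only later, in Lemma~\ref{lemma:ergodicity}.) The paper's own route stays entirely inside the relaxation: it writes $\tilde RX'=X'\tilde P_i+\eta_i$ with $\p_i\eta_i=0$, so each column of $\eta_i$ lies in the kernel of $\p_i$, establishes by induction the expansion $\tilde R^nX'=X'\tilde P_i^n+\eta_i(\tilde P_i^{n-1}+\cdots+\tilde P_i+I)$, and then applies $\p_i$ to kill the error term. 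Your observation that a naive induction produces a defect living in the image of $\ker\p_1$ under $\tilde R$ is a legitimate one---it is exactly the term $\p_1\tilde R^k\eta_1$ that the inductive step must control---but the remedy is to carry the error term explicitly and project it away, not to assume rotational and permutation rigidity of the optimizer. As written, your proof of (\ref{I1 power consistency})--(\ref{I2 power consistency}) assumes what the surrounding theorem sets out to prove.
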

\begin{proof}
See appendix \ref{section:consistency}
\end{proof}

\begin{lemma}
\label{lemma:ergodicity}
$\tilde P_{1}$, $\tilde P_{2}$ are doubly stochastic matrices and it suffices to consider $\|\tilde P_{1} - I_m \|_1< 1$ and $\|\tilde P_{2} - I_m \|_1< 1$. Then
\begin{equation}
\lim_{n\to\infty}   \tilde P_{1}^n  = \lim_{n\to\infty}   \tilde P_{2}^n =  A,
\end{equation}
where $A$ is partitioned by index sets $a_1,\ldots,a_k$ and
\begin{equation}
A(a_i, a_i) = (1/\vert a_i \vert) \mathbf{1}\mathbf{1}^T,\ \ \ A(a_i, a_j) = 0\ \ \mathrm{for}\ i\neq j.
\end{equation}
\end{lemma}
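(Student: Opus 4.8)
The plan is to treat each of $\tilde P_1, \tilde P_2$ as the transition matrix of a finite Markov chain and to read off the limit from the chain's ergodic structure; I will write $P$ for either matrix throughout. First I would establish double stochasticity. Under assumption (1) we have $n_1 = n_2 = m$, so $P$ is square, and the constraints $\mathbf{1}^T P = 1$ and $P\mathbf{1}\le 1$ say that all column sums equal $1$ while all row sums are at most $1$. Summing every entry in two ways, the total equals $m$ (from the columns) and is at most $m$ (from the rows); equality forces each row sum to equal $1$, so $P$ is doubly stochastic. For such a $P$ one computes $\|P - I_m\|_1 = 2\sum_i (1 - P(i,i))$, so the hypothesis $\|P - I_m\|_1 < 1$ gives $P(i,i) > 1/2$ for every $i$; in particular every diagonal entry is strictly positive.

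Next I would analyze the support digraph of $P$ (an edge $i\to j$ whenever $P(i,j) > 0$) and decompose the index set into its communicating classes $a_1,\ldots,a_k$. The key structural step is to show that double stochasticity forbids transitions between distinct classes, so that, after reordering, $P$ is block diagonal with one irreducible doubly stochastic block per class. I would argue this by a counting argument on a terminal class $C$ (one with no outgoing edges in the condensation): the rows indexed by $C$ keep all their mass inside $C$, giving $\sum_{i,j\in C} P(i,j) = |C|$; comparing with the column sums over $C$, which also total $|C|$, forces the incoming mass $\sum_{i\notin C,\ j\in C} P(i,j)$ to vanish. Hence $C$ is isolated, $P$ restricted to $C$ and to its complement are each doubly stochastic, and induction yields the block-diagonal decomposition with no transient states.

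On each block, the strictly positive diagonal entries provide a self-loop at every state, so each irreducible block is aperiodic. Applying the convergence theorem for irreducible aperiodic finite Markov chains (Perron--Frobenius), the $n$-th power of the block on $a_i$ converges to $\mathbf{1}\pi_i^T$, where $\pi_i$ is the unique stationary distribution; double stochasticity of the block forces $\pi_i$ to be uniform, $\pi_i = (1/|a_i|)\mathbf{1}$. Reassembling the blocks gives $\lim_{n\to\infty} P^n = A$ with $A(a_i,a_i) = (1/|a_i|)\mathbf{1}\mathbf{1}^T$ and $A(a_i,a_j) = 0$ for $i\neq j$, exactly as claimed, and the identical argument applies to both $\tilde P_1$ and $\tilde P_2$ (if one wants the single matrix $A$ to serve for both, one must further argue the two partitions coincide, which I would defer to the consistency relations of Lemma~\ref{lemma:consistency}).

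The main obstacle is the block-diagonalization step: it is tempting to think a doubly stochastic chain could still leak probability between classes, and ruling this out — equivalently, showing the chain has no transient states — is precisely where double stochasticity, rather than mere row-stochasticity, is essential. The role of the norm bound is more modest but equally necessary: it supplies the positive diagonals that guarantee aperiodicity, without which $P^n$ could oscillate (as for a permutation matrix) and fail to converge at all.
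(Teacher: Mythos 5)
Your proof follows essentially the same route as the paper's: double stochasticity by the same row/column counting argument, strictly positive diagonals from the $\ell_1$ bound giving aperiodicity, decomposition into irreducible components, and the fundamental theorem of Markov chains with the uniform stationary distribution on each doubly stochastic block. You are in fact more careful than the paper in one spot: the paper simply asserts the block-diagonal structure $\tilde P_{1}(a_i,a_j)=0$ for $i\neq j$, whereas you prove it via the terminal-class counting argument (and your sharper consequence $P(i,i)>1/2$ of the norm bound is also correct). The one piece you defer --- that the partitions for $\tilde P_{1}$ and $\tilde P_{2}$ coincide, so a single $A$ serves for both --- is part of the lemma's claim, and the paper closes it exactly as you anticipate: it takes $n\to\infty$ in the power-consistency relations of Lemma~\ref{lemma:consistency}, multiplies by $[0\ 0\ 1]$ (which is fixed by both projections by assumption (6)) to obtain $[0\ 0\ 1]X\lim_{n}\tilde P_{1}^n=[0\ 0\ 1]X\lim_{n}\tilde P_{2}^n$, and invokes genericity so that no two subsets of points share the same $z$-coordinate of their centroid, forcing the two collections of index sets to be identical.
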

\begin{proof}
See appendix \ref{section:ergodicity}
\end{proof}

We are now ready to prove theorem \ref{exactrecovery}.
\begin{proof}
We first want to show $\tilde R = I_3$. Taking limits of (\ref{I1 power consistency}), (\ref{I2 power consistency}) and using lemma \ref{lemma:ergodicity}, we get
\begin{equation}
\p_1 B X' = \p_1 X' A,\ \ \p_2 B X' = \p_2 X' A,
\end{equation}
where $B = \lim_{n\to\infty} \tilde R^n$ and $A = \lim_{n\to\infty} \tilde P_{1}^n = \lim_{n\to\infty} \tilde P_{2}^n$ . We saw in lemma \ref{lemma:ergodicity} $A$ has multiple irreducible components, each being an averaging operator for the points of relevant indices. The equations lead to
\begin{equation}
\label{stable consistency}
B X' = X' A.
\end{equation}
Since $B = \lim_{n\to\infty}\tilde R^n =  \tilde R \lim_{n\to\infty}\tilde R^{n-1} = \tilde R B$, multiplying (\ref{stable consistency}) from the left by $\tilde R$ we get
\begin{equation}
\label{eigenvalue R}
\tilde R X' A = X' A
\end{equation}

Now we show $\tilde R = I_3$ for each of the following cases.

Case 1: Suppose $A$ has three or more than three irreducible components. Let the centroid of any three components be $[c_1, c_2, c_3] \in \mathbb{R}^{3\times 3}$. By equation (\ref{eigenvalue R}) $\tilde R [c_1, c_2, c_3] = [c_1, c_2, c_3]$. By the assumption that columns of $X'$ are generic, $[c_1, c_2, c_3]$ has full rank hence invertible. Thus $\tilde R = I_3$.

Case 2: Suppose $A$ has two irreducible components. This means $\tilde R [c_1,c_2] = [c_1,c_2]$ where $c_1$ and $c_2$ are independent by the assumption that points are generic. Let $\hat c_1$ and $\hat c_2$ denote the normalized version of $c_1, c_2$. $\tilde R = \p_{c_1 c_2} + \alpha \hat c_1\wedge \hat c_2$, where $\p_{c_1 c_2}$ denotes the projection operator to the plane containing $c_1, c_2$, $\wedge$ denotes the vector cross product and $\alpha$ some scalar constant with $\vert \alpha \vert \leq 1$. We know  $\vert \alpha \vert \leq 1$ since the largest singular value of $\tilde R$ is less than 1, as
\begin{equation}
\mathrm{conv}(\mathbb{SO}(3))\subset \mathrm{conv}(\mathbb{O}(3)) = \{O\in \mathbb{R}^{3\times3} \vert O^T O \preceq I_3 \}
\end{equation}
where $\mathrm{conv}(\mathbb{O}(3))$ is the convex hull of the orthogonal group in 3D. Such characterization of the convex hull of the orthogonal group can be found in \cite{sanyal2011orbitopes}. If $\vert \alpha \vert = 1$, then we indeed have proven $\tilde R = I_3$. If $\vert \alpha \vert < 1$, we will arrive at a contradiction. In this situation $B = \lim_{n\to\infty}\tilde R^n =   \p_{c_1 c_2}$. By the assumption that $m\geq 4$ and equation (\ref{stable consistency}), there exists four points with coordinates $Y = [Y_1, Y_2, Y_3, Y_4] \in \mathbb{R} ^{3 \times 4}$ such that either
\begin{equation}
\p_{c_1 c_2} Y = [c_1, c_1, c_1, c_2],\ \mathrm{or}\ \p_{c_1 c_2} Y = [c_1, c_1, c_2, c_2].
\end{equation}
This means either $Y_1, Y_2, Y_3$ form a line, or $Y_1, Y_2, Y_3, Y_4$ form a plane and both cases violate the assumption that points are in generic positions.

Case 3: Suppose $A$ has a single irreducible component. In this case $B X' = [c_1,\ldots,c_1]$, where $c_1$ is the center of the point cloud. It is easy to show that $B = \p_{c_1}$, where $\p_{c_1}$ is the projection onto the line spanned by $c_1$. However, $\p_{c_1} X' = [c_1,\ldots,c_1]$ implies columns of $X'$ forms a plane with $c_1$ being the normal vector. Again this violates the assumption of generic positions.

Now since $\tilde R = I_3$, from equations (\ref{I1 consistency}) and (\ref{I2 consistency}) we have
\begin{equation}
[0\ 0\ 1]  X'  = [0\ 0\ 1] X' \tilde P_{1},\ \ \ [0\ 0\ 1]  X'  = [0\ 0\ 1] X' \tilde P_{2}.
\end{equation}
Again from the generic positions assumption, $[0\ 0\ 1] X' \in \mathbb{R}^m$ is a vector with distinct entries. It is known that in this situation, the two doubly stochastic matrices $\tilde P_{1}, \tilde P_{2} = I_m$ (see corollary 8 of \cite{brualdi1984doubly}).
\end{proof}

\subsection{Exact recovery property of $\pb$}
Let the solution of $\pb$ be $\tilde R$, $\tilde P_{1}$, $\tilde P_{2}$, $\tilde P_{12}$.
\begin{theorem}
\label{exactrecovery2}
Under our assumptions, the solution $\tilde R, \tilde P_{1}, \tilde P_{2}, \tilde P_{2}$ to $\pb$ is unique, and $\tilde R = \bar R, \tilde P_{1} = \bar P_{1}, \tilde P_{2} = \bar P_{2}, \tilde P_{12} = \bar P_{12}$.
\end{theorem}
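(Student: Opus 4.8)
The plan is to exploit the fact that $\pb$ differs from $\pa$ only by an additive correspondence term and a single coupling constraint, so that the bulk of the argument can be inherited from the already-proved Theorem \ref{exactrecovery}. First I would establish the analog of Lemma \ref{lemma:changedata} for $\pb$: under the change of data $X \to \bar R X$, $\I_1 \to \I_1 \bar P_1$, $\I_2 \to \I_2 \bar P_2$, the solution transforms by the invertible maps $\tilde R \to \tilde R \bar R^T$, $\tilde P_i \to \bar P_i^T \tilde P_i$, together with the new map $\tilde P_{12} \to \bar P_1^T \tilde P_{12} \bar P_2$. I would verify that this augmented map leaves the problem invariant: the term $\| P_{12} - \hat P_{12}\|_1$ is unchanged because congruence by the permutations $\bar P_1, \bar P_2$ merely relabels rows and columns and the entrywise $\ell_1$ norm is permutation-invariant (using $\hat P_{12} = \bar P_{12}$), while the large block matrix is transformed by congruence with the orthogonal block-diagonal matrix $\mathrm{diag}(\bar P_1, \bar P_2, I_m)$, which preserves positive semidefiniteness and the identity diagonal blocks. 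This reduces to the normalized ground truth $\bar R = I_3$, $\bar P_1 = \bar P_2 = I_m$; since $P_{12} = Q_1^T Q_2$ and $\bar P_1 = \bar P_2 = I_m$ forces $Q_1 = Q_2 = Q_3$, we also obtain $\bar P_{12} = I_m$. It thus suffices to prove that $(\tilde R, \tilde P_1, \tilde P_2, \tilde P_{12}) = (I_3, I_m, I_m, I_m)$ is the unique optimizer, with images $\I_i = \p_i X'$.

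The central observation is that the objective of $\pb$ is separable: the two $\ell_2/\ell_1$ terms depend only on $(R, P_1, P_2)$ and coincide with the objective $g(R,P_1,P_2)$ of $\pa$, while $\| P_{12} - I_m\|_1$ depends only on $P_{12}$; the sole coupling between the two groups is the positive semidefinite cycle-consistency constraint. Both parts of the objective are nonnegative. At the normalized ground truth we have $g(I_3, I_m, I_m) = 0$ and $\|I_m - I_m\|_1 = 0$, and the block matrix becomes $(\mathbf{1}\mathbf{1}^T)\otimes I_m \succeq 0$ (with $\mathbf{1}\in\mathbb{R}^3$), so the ground truth is feasible and attains objective value $0$. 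Because the objective is a sum of two nonnegative parts, the optimal value of $\pb$ is $0$, and every optimizer must make both parts vanish simultaneously.

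From $\|\tilde P_{12} - I_m\|_1 = 0$ we immediately get $\tilde P_{12} = I_m = \bar P_{12}$. From $g(\tilde R, \tilde P_1, \tilde P_2) = 0$ both residuals vanish, i.e. $\p_1 \tilde R X' = \p_1 X' \tilde P_1$ and $\p_2 \tilde R X' = \p_2 X' \tilde P_2$, which are exactly the consistency equations of Lemma \ref{lemma:consistency}, and the power-consistency identities then follow by iteration as in the $\pa$ proof. Since $\pb$ imposes the same constraints $P_i \in [0,1]^{m \times m}$, $P_i \mathbf{1} \leq 1$, $\mathbf{1}^T P_i = 1$ on $P_1, P_2$, these matrices are doubly stochastic, so Lemma \ref{lemma:ergodicity} applies and the three-case generic-position argument of Theorem \ref{exactrecovery} goes through verbatim to force $\tilde R = I_3$ and hence $\tilde P_1 = \tilde P_2 = I_m$. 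The semidefinite constraint only shrinks the feasible set and never interferes with this chain of implications, because the identity solution already satisfies it. Undoing the change of data yields $\tilde R = \bar R$, $\tilde P_i = \bar P_i$, $\tilde P_{12} = \bar P_{12}$.

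I expect the main obstacle to be the bookkeeping in the first step, namely verifying that the extra correspondence term and, especially, the semidefinite coupling constraint are genuinely invariant under the augmented change-of-data map (the congruence by $\mathrm{diag}(\bar P_1, \bar P_2, I_m)$), so that the reduction to identity ground truth is legitimate. Once separability and the vanishing of the optimal value are established, the pose-and-correspondence half of the argument is inherited wholesale from the proof of Theorem \ref{exactrecovery}, and the only genuinely new content is the trivial recovery $\tilde P_{12} = \bar P_{12}$ forced by the $\ell_1$ term.
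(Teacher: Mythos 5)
Your proposal is correct and follows essentially the same route as the paper: the paper's own (very terse) proof likewise inherits the recovery of $\tilde R, \tilde P_1, \tilde P_2$ wholesale from the argument for \texttt{conreg1} and observes that the $\ell_1$ term $\| \tilde P_{12} - \hat P_{12}\|_1$ is strictly positive unless $\tilde P_{12} = \bar P_{12}$. The extra bookkeeping you supply --- the augmented change-of-data map for $P_{12}$, the congruence by $\mathrm{diag}(\bar P_1, \bar P_2, I_m)$ preserving the semidefinite constraint, and the separability of the objective --- is all correct and fills in details the paper leaves implicit.
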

\begin{proof}
The recovery of $\bar R, \bar P_{1}, P_{2}$ can be proven in the exact manner as in previous subsection. If $\hat P_{12} = \bar P_{12}$, $\| \tilde P_{12} - \hat P_{12} \| > 0$  unless $\tilde P_{12} = \bar P_{12}$. We therefore conclude our proof.
\end{proof}

Although it does not seem like the additional constraints in $\pb$ helps our proof at this point, in practice in the case when $n_1,n_2\geq n_3$, $\pb$ demonstrates slight advantage over $\pa$ in simulation in terms of stability (Fig. \ref{fig:fullsim}b) and also in real data (Fig. \ref{fig:convex real}).

\section{Clinical Application and Additional Features}\label{clinical background}
Though the problem of 2D/3D registration appears in several medical imaging applications, this paper is motivated by the clinical setting in which two or more fluoroscopic images of the coronaries are used for intervention guidance. In a procedure called percutaneous coronary intervention (PCI), commonly referred to as angioplasty, a cardiac interventionalist introduces a thin, flexible tube with a device on the end via arterial access (usually femoral) into the patient's coronary arteries which may be blocked by calcification.  To guide and position the device accurately, the cardiac interventionalist  uses X-ray fluoroscopy with intermittent contrast injections (angiography). The resultant fluoroscopic images contain projected outlines of the coronary vessels and blockages if any. However, these intra-operative fluoroscopic images do not have the performance characteristic of pre-operative images such as Computed Tomography Angiography (CTA) with blood pool contrast injection that enables the occlusions causing calcifications to be clearly distinguished. The benefit of using pre-operative Coronary CTA images are well documented~\cite{hidehiko2013,rolf2013jcvi,cheung2010,qu2014} and are becoming standard of care. Coronary CTA may also allow better determination of calcification, lesion length, stump morphology, definition of post-CABG anatomy, and presence of side branches compared to fluoroscopy or angiography alone~\cite{rolf2013jcvi}.

To aid the cardiac interventionalist performing the procedure on the coronaries, a strategy to provide additional information by fusing pre-operative images with intra-operative fluoroscopic has been developed. However, finding a transformation to align pre-operative and intra-operative images
is challenging since finding inter modal correspondences is a nontrivial problem. Typically to achieve this multi-modality fusion, vessel centerlines segmented from both modalities are used as landmarks for inter modal correspondences. A 3D model of the vessels extracted from pre-procedure CTA is aligned with the 2D fluoroscopic views, and overlaid on top of the live fluoroscopy images, thereby augmenting the interventional images.

In this paper, we process the CTA images using the segmentation algorithm presented in~\cite{zheng2013robust} and fluoroscopic images using~\cite{deschamps2001fast,sironi2014multiscale} to get the respective point-sets. Thus, the clinical alignment transformation between pre-operative and intra-operative images is reduced to \texttt{(REG1)} or \texttt{(REG2)}.

Though this paper was motivated by the clinical scenario of CTO, our algorithms are generalized to handle other clinical situations where natural structures can be obtained by image segmentation methods. For instance, vessels from brain, liver, kidney or peripheral images can be used by processing them with relevant segmentation algorithms. As illustrated in Section~\ref{S:bunny}, the algorithm is not restricted to point-sets from vessel like structures only. Other structures, notably surfaces, extracted by means of segmentation algorithm can be used to obtain desired point sets required by methods \texttt{(REG1)} or \texttt{(REG2)}. Typically, the structure representations extracted from segmentation methods need to be re-sampled uniformly to result in respective point-sets.

\subsection{Incorporating additional features}
\label{section:Additional features}
In the context of finding an alignment transformation between two point clouds, point descriptors maybe valuable by favoring transformation that matches the descriptors as well as the coordinates. Our proposed formulations easily allow the incorporation of descriptors by adding additional terms in the cost of (REG1), (REG2) that encourage matching of transformation invariant features, or features transforming according to rotation. To match transformation invariant features, we simply add the following terms:
\begin{equation}
\|X^F - \I_1^F P_{1} \|_{2,1} + \|X^F - \I_2^F P_{2} \|_{2,1},
\end{equation}
to our cost in (REG1) and (REG2). $X^F \in \mathbb{R}^{d\times m}$, $\I_1^F \in \mathbb{R}^{d\times n_1}$, and $\I_2^F \in \mathbb{R}^{d\times n_2}$ denote some $d$ dimensional feature vectors for the points in 3D model and the images, respectively. To deal with features that transform with rotation, such as intensity gradient, the following terms can be added to (REG1) and (REG2):
\begin{equation}
\|\p_1 R X^F - \I_1^F P_{1} \|_{2,1} + \|\p_2 R X^F - \I_2^F P_{2} \|_{2,1}.
\end{equation}

\subsection{Constructing features for coronary vessel}
To incorporate features for the specific application of registering coronary vessel, we will leverage the idea of patch \cite{buades2005nlm}. We note that the typical features for vessels, such as the tangent of the vessel, are in general extracted from the local neighborhood of the particular point \cite{lowe1999sift}.
Therefore, we simply combine the coordinates of neighborhood points around each point, which we denote as patch, as the descriptor for each point. For the coordinate of point $i$, we concatenate it with the coordinate of subsequent points $i+1, i+2, \ldots, i+n_p-1$ to form a patch of size $n_p$. We denote the result of such concatenation as $X_i^P$, ${\I_1^P}_i$ and ${\I_2^P}_i$, and we store it as the $i$-th column of the matrices $X^P \in \mathbb{R}^{3n_p  \times m}$ and $\I_1 \in \mathbb{R}^{3n_p  \times n_1}$, $\I_2 \in \mathbb{R}^{3n_p  \times n_2}$. Now instead of registering point, our goal is then to register these patches of the 3D model to the images. The cost in (REG1) when considering matching patches become
\begin{equation}
\| (I_{n_p} \otimes \Psi_1 R) X^P - \I_1^P P_{1} \|_{2,1} + \| (I_{n_p} \otimes \Psi_2 R) X^P- \I_2^P P_{2} \|_{2,1},
\end{equation}
and for (REG2) the cost is
\begin{gather}
\| (I_{n_p} \otimes \Psi_1 R) X^P - \I_1^P P_{1} \|_{2,1} + \cr
\| (I_{n_p} \otimes \Psi_2 R) X^P- \I_2^P P_{2} \|_{2,1} + \| P_{12} - \hat P_{12} \|_1.
\end{gather}
In our experiments, we found $n_p = 3$ suffices to improve the solution without introducing too many variables into the optimization.
A practical issue is that one needs to take care of the sampling density of the points on the centerline. If the point spacing is very different between $X$ and $\I_1$, $\I_2$, the patches cannot be well matched. To deal with this issue, we first project the randomly posed 3D model using $\p_1$ and $\p_2$, calculate the average spacing between two subsequent points in the projection, and sample the image points $\I_1$, $\I_2$ according to this spacing. 
\section{Experiments}
\label{section:Experiments}

\begin{figure}
\centering
\includegraphics[width=0.45\textwidth]{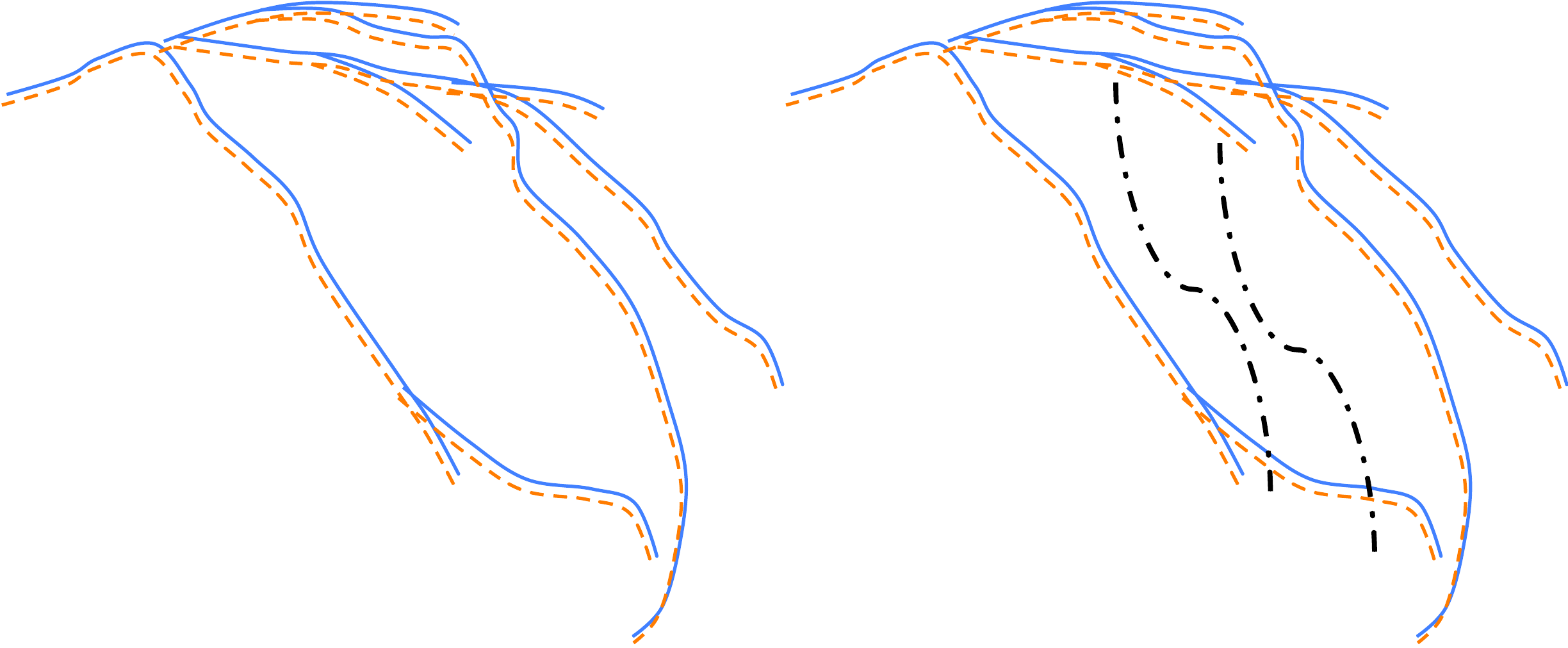}
\caption{An illustration of synthetic data used for simulations. Solid blue lines represent the centerlines on 2D and dashed red lines represent the projection of 3D model on the image plane. The left figure illustrates the case when the 3D model is a proper subset.}\label{fig:full partial}
\end{figure}

\begin{figure}
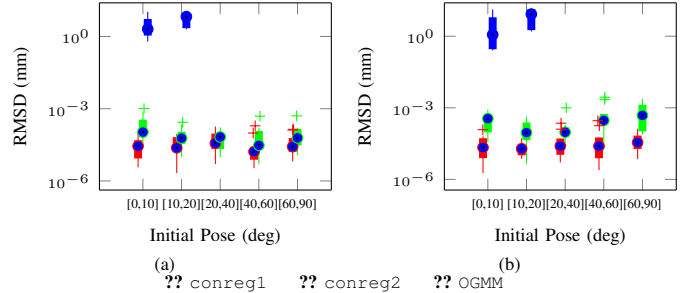

  \centering
  \setlength\fheight{2.5cm}
    \setlength\fwidth{0.17\textwidth}
\begin{minipage}{0.48\columnwidth}{\input{figures/simdatafull_angles.tex}}
~\\[-4ex] \centering{\scriptsize(a)}
\end{minipage}
\begin{minipage}{.01\columnwidth}
\hfill
\end{minipage}
\begin{minipage}{0.48\columnwidth}{\input{figures/simdatapartial_angles.tex}}
~\\[-4ex] \centering{\scriptsize(b)}
\end{minipage}
\\
\scriptsize{
\begin{tabular}{c c c}
 \ref{pgfplots:lreg1} \texttt{conreg1} & \ref{pgfplots:lreg2} \texttt{conreg2} & \ref{pgfplots:ogmm} \texttt{OGMM} \\
\end{tabular}
}
    \caption{Simulation results of the two convex programs and \texttt{OGMM} (a) when the images are the projection of the complete 3D model. and (b) when the images contain the projections of the 3D model as a proper subset. Here we only plot the cases when \texttt{OGMM} finds a solution within 25 mm RMSD. If for a particular rotation range there is no boxplot it means no solution has less than 25 mm RMSD for that range.}
    \label{fig:fullsimnoiseless}
\end{figure}

In this section we evaluate the performance of our algorithm using synthetic data of the heart vessels, and real CT data of six different patients. In particular, we demonstrate through simulation that regardless of the initial pose of the 3D model, our algorithm exactly recovers the ground truth pose when there is no noise. In the presence of image noise, our algorithm is able to return a solution close to the ground truth. These observations are also verified when working with real data. For these experiments we solve \texttt{conreg1} and \texttt{conreg2} using the convex programming package \texttt{CVX} in MATLAB using interior point method. In general the time complexity of interior-point-method for conic programming is $O(N^{3.5})$ where $N$ denotes the total number of variables \cite{mosekcomplexity}. The main source of complexity in our program comes from the stochastic matrices $P_1$ and $P_2$. Assuming there are $m$ points, the time complexity is then $O(N^{3.5})\sim O(m^7)$. Since the interior-point-method is a second order method requiring the storage of Hessian, the space complexity is $O(N^2)\sim O(m^4)$. While this looks intimidating, as reported in the upcoming sections it is quite fast in practice due to the fast implementation of the solvers in \texttt{CVX}. To measure the quality of registration, we use the root-mean-square-distance
\begingroup\makeatletter\def\f@size{9}\check@mathfonts
\def\maketag@@@#1{\hbox{\m@th\large\normalfont#1}}
\begin{equation}
\label{RMSD}
\mathrm{RMSD} = \frac{1}{2\sqrt{m}} (\|\p_1 R^\star X - \p_1 \bar R X\|_F + \|\p_2 R^\star X - \p_2 \bar R X\|_F).
\end{equation}
\endgroup
As a reminder, $R^\star$ is the solution after rounding from $\pa,\pb$, and $\bar R$ is the ground truth rotation. $\bar R$ is known in the case of synthetic data, and for the case of real data it is manually given by medical experts. The unit of RMSD will be in millimeter.

\subsection{Synthetic data}

We use synthetic data to demonstrate the ability of our algorithm to exactly recover the pose of the 3D model when there is no noise in the image. We also add bounded noise to each point in the image in the following way
\begin{gather}
\label{noisemodel}
{\I_1}_i = \p_1 (R X_i + {\varepsilon_1}_i),\ \ \  {\I_2}_j = \p_2 (R X_j + {\varepsilon_2}_j),\cr
i\in 1,\ldots,n_1,\ j\in 1,\ldots,n_2,
\end{gather}
where ${\varepsilon_1}_i$, ${\varepsilon_2}_j \sim \mathcal{U}[-\varepsilon,\varepsilon]^3 $ and $\mathcal{U}[-\varepsilon,\varepsilon]^3$  is the uniform distribution in the cube $[-\varepsilon,\varepsilon]^3$.

In order to run \texttt{conreg2}, we simply get $\hat P_{12}$ from the epipolar constraints. For a point in image 1, denoted as $x\in \mathbb{R}^3$ in homogeneous pixel coordinate, the epipolar line $l$ in image 2 can be computed as $l=F x$, where $F\in \mathbb{R}^{3\times 3}$ denotes the fundamental matrix. If point $x'$ in image 2 satisfies $\vert l^T x' \vert = \vert x^T F x' \vert \leq \delta$ where $\delta$ is some pre-defined threshold, we then set the entry in $\hat P_{12}$ corresponding to the points $x$ and $x'$ to be 1. We pick $\delta$ to be 2 to 3 times the average spacing between the neighboring points.

We first start with the Stanford bunny dataset to illustrate the exact recovery of pose from \texttt{conreg1} and \texttt{conreg2}. We first project the bunny in $x$ and $y$-directions to obtain $\I_1$ and $\I_2$, respectively. We then pose the model bunny $X$ with arbitrary rotation and register it to its projections. In Fig. \ref{fig:bunny}a, we show the registration results of \texttt{conreg1} in one viewing direction with $m=473$. When there is no noise, the exact recovery of pose is verified in Fig.~\ref{fig:bunny}. While we do not give guarantees on the approximate recovery of pose when there is noise, we also show in Fig. \ref{fig:bunny}b that our proposed methods are stable when perturbed by noise. We note that for Fig. \ref{fig:bunny}b, we sparsify the number of points to $m=60$ and the average distance between neighboring points is 0.04. Although \texttt{conreg1} can solve registration problem with $m=473$ in less than 10 seconds, the computational cost of \texttt{conreg2} is high and we have to limit the number of points.
\begin{figure}
\centering
\subfloat[Bunny $\varepsilon = 0$]{\includegraphics[width = 0.18\textwidth]{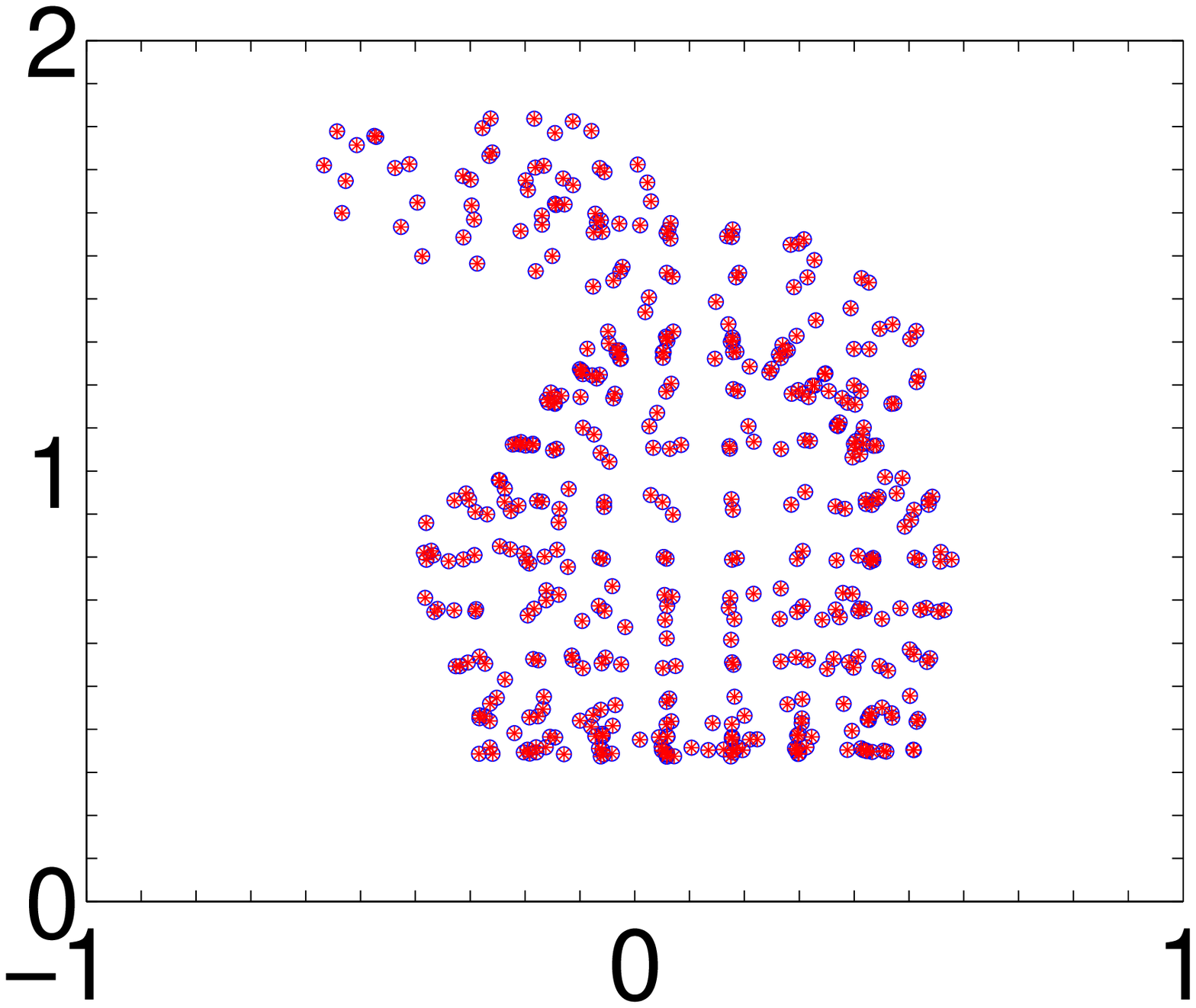}}
\subfloat[Stability]{\includegraphics[width = 0.2\textwidth]{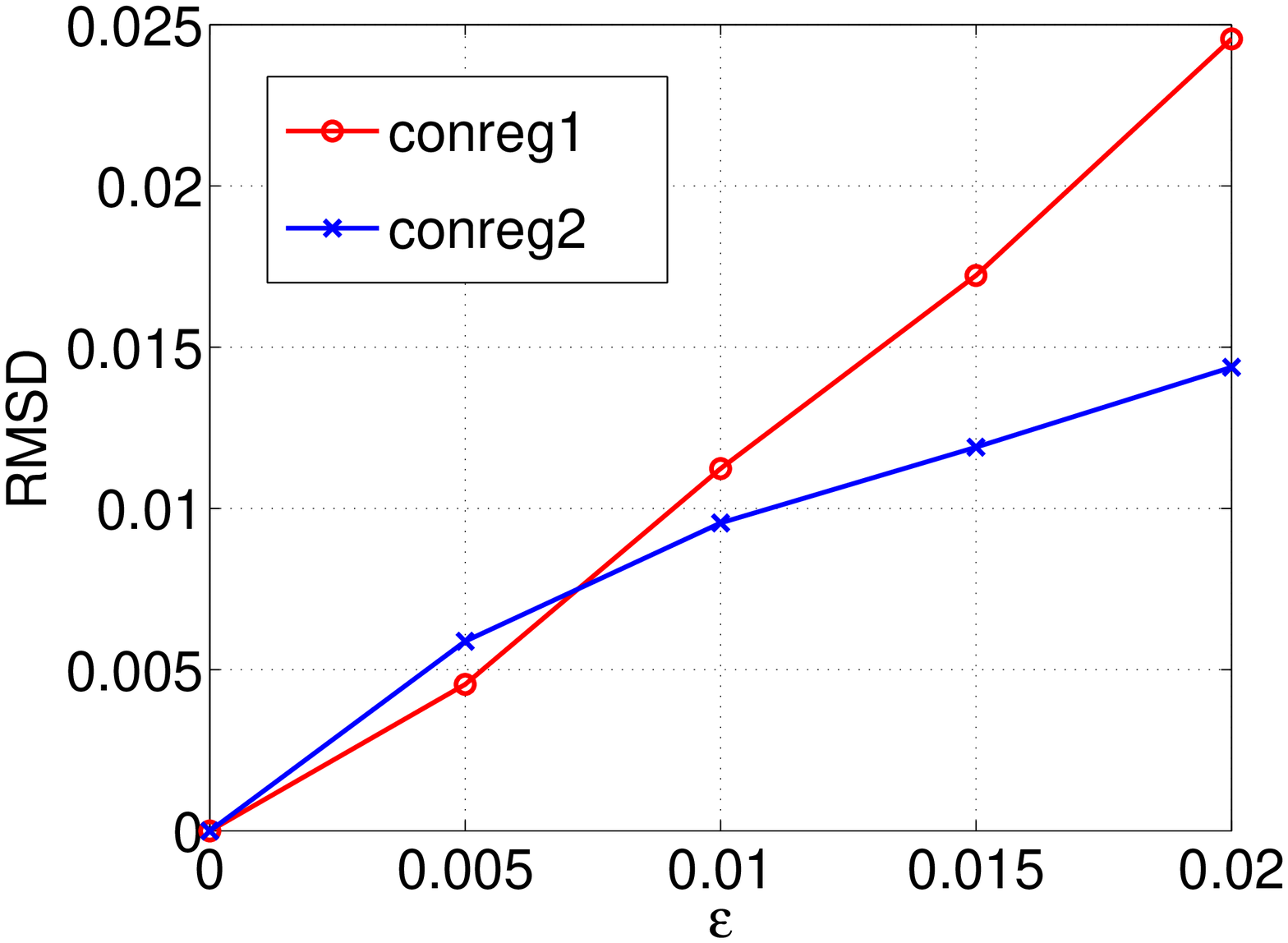}}
\caption{(a) A simulation using the Stanford bunny dataset when there is no noise. The red points are taken from $\I_1$, and the blue points are taken from the model $X$ posed by the solution from \texttt{conreg1}. The pose is exactly recovered when there is no noise. (b) RMSD v.s. noise level for the bunny dataset. Each data point is averaged over 20 noise and pose realizations. The average spacing between the points is about 0.04. }\label{fig:bunny}
\end{figure}

For the registration of coronary vessels, we perform simulations for two different cases (Fig. \ref{fig:full partial}), namely, when the 2D images are exactly the projections of the 3D model and when there are extra centerlines in the 2D images. In the latter case, the 3D model is a proper subset. 
In our simulations, we first rotate the 3D model of coronary vessels into some arbitrary posture and project it in two fixed orthogonal directions. We tested our algorithm systematically when rotation $R$ in (\ref{noisemodel}) is within each of the ranges $[0^\circ,10^\circ],[10^\circ, 20^\circ],[20^\circ, 40^\circ],[40^\circ, 60^\circ]$, and $[60^\circ, 90^\circ]$, using 10 different noise realizations in each case.

\begin{figure}
  \centering
  \setlength\fheight{2.5cm}
    \setlength\fwidth{0.18\textwidth}
\begin{minipage}{0.48\columnwidth}{
%
%
%
\begin{tikzpicture}

\begin{axis}[%
width=\fwidth,
height=0.996313\fheight,
at={(0\fwidth,0\fheight)},
scale only axis,
unbounded coords=jump,
separate axis lines,
every outer x axis line/.append style={black},
every x tick label/.append style={font=\color{black}},
xmin=0.875,
xmax=5.375,
xmode=log,
xminorticks=true,
xtick={1.125,2.125,3.125,4.125,5.125},
xticklabels={{$\infty$},{21.6},{15.6},{12.1},{9.6}},
xlabel={Noise (dB)},
every outer y axis line/.append style={black},
every y tick label/.append style={font=\color{black}},
ymin=0,
ymax=12,
ylabel={RMSD (mm)},
legend columns=-1,
legend to name=simdataconvex2_legend,
label style={font=\scriptsize}, legend style={font=\footnotesize}, every x tick label/.append style={font=\tiny}, every y tick label/.append style={font=\tiny}
]
\addplot [color=red,solid,forget plot]
  table[row sep=crcr]{%
1	0\\
1	0\\
};
\addplot [color=green,solid,forget plot]
  table[row sep=crcr]{%
1.25	0\\
1.25	0\\
};
\addplot [color=red,solid,forget plot]
  table[row sep=crcr]{%
2	1.45844229123501\\
2	2.63059638624397\\
};
\addplot [color=green,solid,forget plot]
  table[row sep=crcr]{%
2.25	1.47988758612838\\
2.25	2.49834089422492\\
};
\addplot [color=red,solid,forget plot]
  table[row sep=crcr]{%
3	2.98160679223802\\
3	5.07896765000554\\
};
\addplot [color=green,solid,forget plot]
  table[row sep=crcr]{%
3.25	3.01138609285593\\
3.25	4.87531650165371\\
};
\addplot [color=red,solid,forget plot]
  table[row sep=crcr]{%
4	4.99899219690851\\
4	7.89142555173809\\
};
\addplot [color=green,solid,forget plot]
  table[row sep=crcr]{%
4.25	4.94617249539616\\
4.25	7.57398089154955\\
};
\addplot [color=red,solid,forget plot]
  table[row sep=crcr]{%
5	6.43525708645747\\
5	10.8116582762301\\
};
\addplot [color=green,solid,forget plot]
  table[row sep=crcr]{%
5.25	6.4226962087229\\
5.25	10.8116582762301\\
};
\addplot [color=red,solid,line width=3.0pt,forget plot]
  table[row sep=crcr]{%
1	0\\
1	0\\
};
\addplot [color=green,solid,line width=3.0pt,forget plot]
  table[row sep=crcr]{%
1.25	0\\
1.25	0\\
};
\addplot [color=red,solid,line width=3.0pt,forget plot]
  table[row sep=crcr]{%
2	1.71155242429631\\
2	2.08627141294257\\
};
\addplot [color=green,solid,line width=3.0pt,forget plot]
  table[row sep=crcr]{%
2.25	1.64704032406245\\
2.25	2.0755460192929\\
};
\addplot [color=red,solid,line width=3.0pt,forget plot]
  table[row sep=crcr]{%
3	3.62560319279253\\
3	4.64056654255863\\
};
\addplot [color=green,solid,line width=3.0pt,forget plot]
  table[row sep=crcr]{%
3.25	3.43244219781927\\
3.25	4.31585139134432\\
};
\addplot [color=red,solid,line width=3.0pt,forget plot]
  table[row sep=crcr]{%
4	5.72377253764527\\
4	6.83918539330669\\
};
\addplot [color=green,solid,line width=3.0pt,forget plot]
  table[row sep=crcr]{%
4.25	5.51620659033776\\
4.25	6.52134013357534\\
};
\addplot [color=red,solid,line width=3.0pt]
  table[row sep=crcr]{%
5	7.59673811140029\\
5	9.18250636127816\\
};
\addlegendentry{\texttt{conreg1}};
\label{pgfplots:simconreg1}

\addplot [color=green,solid,line width=3.0pt]
  table[row sep=crcr]{%
5.25	7.50803458226248\\
5.25	9.10179386302299\\
};
\addlegendentry{\texttt{conreg2}};
\label{pgfplots:simconreg2}

\addplot [color=blue,only marks,mark=*,mark options={solid,fill=blue,draw=red},forget plot]
  table[row sep=crcr]{%
1	0\\
};
\addplot [color=blue,only marks,mark=*,mark options={solid,fill=blue,draw=green},forget plot]
  table[row sep=crcr]{%
1.25	0\\
};
\addplot [color=blue,only marks,mark=*,mark options={solid,fill=blue,draw=red},forget plot]
  table[row sep=crcr]{%
2	1.98521820039279\\
};
\addplot [color=blue,only marks,mark=*,mark options={solid,fill=blue,draw=green},forget plot]
  table[row sep=crcr]{%
2.25	1.8557762561928\\
};
\addplot [color=blue,only marks,mark=*,mark options={solid,fill=blue,draw=red},forget plot]
  table[row sep=crcr]{%
3	4.09583548071032\\
};
\addplot [color=blue,only marks,mark=*,mark options={solid,fill=blue,draw=green},forget plot]
  table[row sep=crcr]{%
3.25	3.8909599104109\\
};
\addplot [color=blue,only marks,mark=*,mark options={solid,fill=blue,draw=red},forget plot]
  table[row sep=crcr]{%
4	6.30279414388284\\
};
\addplot [color=blue,only marks,mark=*,mark options={solid,fill=blue,draw=green},forget plot]
  table[row sep=crcr]{%
4.25	6.15815458220558\\
};
\addplot [color=blue,only marks,mark=*,mark options={solid,fill=blue,draw=red},forget plot]
  table[row sep=crcr]{%
5	8.48053899114163\\
};
\addplot [color=blue,only marks,mark=*,mark options={solid,fill=blue,draw=green},forget plot]
  table[row sep=crcr]{%
5.25	8.29263206523851\\
};
\addplot [color=blue,mark size=0.5pt,only marks,mark=x,mark options={solid,draw=black},forget plot]
  table[row sep=crcr]{%
1	0\\
};
\addplot [color=blue,mark size=0.5pt,only marks,mark=x,mark options={solid,draw=black},forget plot]
  table[row sep=crcr]{%
1.25	0\\
};
\addplot [color=blue,mark size=0.5pt,only marks,mark=x,mark options={solid,draw=black},forget plot]
  table[row sep=crcr]{%
2	1.98521820039279\\
};
\addplot [color=blue,mark size=0.5pt,only marks,mark=x,mark options={solid,draw=black},forget plot]
  table[row sep=crcr]{%
2.25	1.8557762561928\\
};
\addplot [color=blue,mark size=0.5pt,only marks,mark=x,mark options={solid,draw=black},forget plot]
  table[row sep=crcr]{%
3	4.09583548071032\\
};
\addplot [color=blue,mark size=0.5pt,only marks,mark=x,mark options={solid,draw=black},forget plot]
  table[row sep=crcr]{%
3.25	3.8909599104109\\
};
\addplot [color=blue,mark size=0.5pt,only marks,mark=x,mark options={solid,draw=black},forget plot]
  table[row sep=crcr]{%
4	6.30279414388284\\
};
\addplot [color=blue,mark size=0.5pt,only marks,mark=x,mark options={solid,draw=black},forget plot]
  table[row sep=crcr]{%
4.25	6.15815458220558\\
};
\addplot [color=blue,mark size=0.5pt,only marks,mark=x,mark options={solid,draw=black},forget plot]
  table[row sep=crcr]{%
5	8.48053899114163\\
};
\addplot [color=blue,mark size=0.5pt,only marks,mark=x,mark options={solid,draw=black},forget plot]
  table[row sep=crcr]{%
5.25	8.29263206523851\\
};
\addplot [color=blue,mark size=2.0pt,only marks,mark=+,mark options={solid,draw=red},forget plot]
  table[row sep=crcr]{%
nan	nan\\
};
\addplot [color=blue,mark size=2.0pt,only marks,mark=+,mark options={solid,draw=green},forget plot]
  table[row sep=crcr]{%
nan	nan\\
};
\addplot [color=blue,mark size=2.0pt,only marks,mark=+,mark options={solid,draw=red},forget plot]
  table[row sep=crcr]{%
nan	nan\\
};
\addplot [color=blue,mark size=2.0pt,only marks,mark=+,mark options={solid,draw=green},forget plot]
  table[row sep=crcr]{%
nan	nan\\
};
\addplot [color=blue,mark size=2.0pt,only marks,mark=+,mark options={solid,draw=red},forget plot]
  table[row sep=crcr]{%
nan	nan\\
};
\addplot [color=blue,mark size=2.0pt,only marks,mark=+,mark options={solid,draw=green},forget plot]
  table[row sep=crcr]{%
nan	nan\\
};
\addplot [color=blue,mark size=2.0pt,only marks,mark=+,mark options={solid,draw=red},forget plot]
  table[row sep=crcr]{%
nan	nan\\
};
\addplot [color=blue,mark size=2.0pt,only marks,mark=+,mark options={solid,draw=green},forget plot]
  table[row sep=crcr]{%
nan	nan\\
};
\addplot [color=blue,mark size=2.0pt,only marks,mark=+,mark options={solid,draw=red},forget plot]
  table[row sep=crcr]{%
nan	nan\\
};
\addplot [color=blue,mark size=2.0pt,only marks,mark=+,mark options={solid,draw=green},forget plot]
  table[row sep=crcr]{%
nan	nan\\
};
\end{axis}
\end{tikzpicture}
~\\[-2ex]\centering{\scriptsize(a)}
\end{minipage}
\begin{minipage}{.01\columnwidth}
\hfill
\end{minipage}
\begin{minipage}{0.48\columnwidth}{
%
%
\definecolor{mycolor1}{rgb}{1.00000,0.00000,0.00000}%
\definecolor{mycolor2}{rgb}{0.00000,1.00000,0.00000}%
\begin{tikzpicture}

\begin{axis}[%
width=\fwidth,
height=0.996313\fheight,
at={(0\fwidth,0\fheight)},
scale only axis,
unbounded coords=jump,
separate axis lines,
every outer x axis line/.append style={black},
every x tick label/.append style={font=\color{black}},
xmode=log,
xmin=0.875,
xmax=5.375,
xtick={1.125,2.125,3.125,4.125,5.125},
xticklabels={{$\infty$},{21.6},{15.6},{12.1},{9.6}},
xlabel={Noise (dB)},
xminorticks=true,
every outer y axis line/.append style={black},
every y tick label/.append style={font=\color{black}},
ymin=0,
ymax=12,
ylabel={RMSD (mm)},
legend columns=-1,
legend to name=simdataconvex2_legend,
label style={font=\scriptsize}, legend style={font=\footnotesize}, every x tick label/.append style={font=\tiny}, every y tick label/.append style={font=\tiny}
]
\addplot [color=mycolor1,solid,forget plot]
  table[row sep=crcr]{%
1	0\\
1	0\\
};
\addplot [color=mycolor2,solid,forget plot]
  table[row sep=crcr]{%
1.25	0\\
1.25	0\\
};
\addplot [color=mycolor1,solid,forget plot]
  table[row sep=crcr]{%
2	1.7884533909458\\
2	3.63251056945719\\
};
\addplot [color=mycolor2,solid,forget plot]
  table[row sep=crcr]{%
2.25	1.67952408320059\\
2.25	3.15467141971697\\
};
\addplot [color=mycolor1,solid,forget plot]
  table[row sep=crcr]{%
3	3.56515759004045\\
3	7.82500764360873\\
};
\addplot [color=mycolor2,solid,forget plot]
  table[row sep=crcr]{%
3.25	3.63418406321611\\
3.25	6.81623840176844\\
};
\addplot [color=mycolor1,solid,forget plot]
  table[row sep=crcr]{%
4	5.35602376538474\\
4	11.3567172190925\\
};
\addplot [color=mycolor2,solid,forget plot]
  table[row sep=crcr]{%
4.25	5.58279383674379\\
4.25	11.1871711267721\\
};
\addplot [color=mycolor1,solid,forget plot]
  table[row sep=crcr]{%
5	7.49105445433071\\
5	13.8192286023526\\
};
\addplot [color=mycolor2,solid,forget plot]
  table[row sep=crcr]{%
5.25	7.31910806726331\\
5.25	13.8192286023526\\
};
\addplot [color=mycolor1,solid,line width=3.0pt,forget plot]
  table[row sep=crcr]{%
1	0\\
1	0\\
};
\addplot [color=mycolor2,solid,line width=3.0pt,forget plot]
  table[row sep=crcr]{%
1.25	0\\
1.25	0\\
};
\addplot [color=mycolor1,solid,line width=3.0pt,forget plot]
  table[row sep=crcr]{%
2	2.17622113789785\\
2	2.92418850698182\\
};
\addplot [color=mycolor2,solid,line width=3.0pt,forget plot]
  table[row sep=crcr]{%
2.25	2.03924925760305\\
2.25	2.52394969747254\\
};
\addplot [color=mycolor1,solid,line width=3.0pt,forget plot]
  table[row sep=crcr]{%
3	4.5250714622109\\
3	5.94711881248926\\
};
\addplot [color=mycolor2,solid,line width=3.0pt,forget plot]
  table[row sep=crcr]{%
3.25	4.21346428906241\\
3.25	5.3785286230186\\
};
\addplot [color=mycolor1,solid,line width=3.0pt,forget plot]
  table[row sep=crcr]{%
4	6.90883092036882\\
4	8.6955160269142\\
};
\addplot [color=mycolor2,solid,line width=3.0pt,forget plot]
  table[row sep=crcr]{%
4.25	6.49192114577134\\
4.25	8.39371724740698\\
};
\addplot [color=mycolor1,solid,line width=3.0pt]
  table[row sep=crcr]{%
5	9.00796121188473\\
5	11.141315369778\\
};

\addplot [color=mycolor2,solid,line width=3.0pt]
  table[row sep=crcr]{%
5.25	8.79275524183172\\
5.25	10.9531778002718\\
};

\addplot [color=blue,only marks,mark=*,mark options={solid,fill=blue,draw=mycolor1},forget plot]
  table[row sep=crcr]{%
1	0\\
};
\addplot [color=blue,only marks,mark=*,mark options={solid,fill=blue,draw=mycolor2},forget plot]
  table[row sep=crcr]{%
1.25	0\\
};
\addplot [color=blue,only marks,mark=*,mark options={solid,fill=blue,draw=mycolor1},forget plot]
  table[row sep=crcr]{%
2	2.42363485823428\\
};
\addplot [color=blue,only marks,mark=*,mark options={solid,fill=blue,draw=mycolor2},forget plot]
  table[row sep=crcr]{%
2.25	2.23225324770536\\
};
\addplot [color=blue,only marks,mark=*,mark options={solid,fill=blue,draw=mycolor1},forget plot]
  table[row sep=crcr]{%
3	4.88818183063221\\
};
\addplot [color=blue,only marks,mark=*,mark options={solid,fill=blue,draw=mycolor2},forget plot]
  table[row sep=crcr]{%
3.25	4.59441456783594\\
};
\addplot [color=blue,only marks,mark=*,mark options={solid,fill=blue,draw=mycolor1},forget plot]
  table[row sep=crcr]{%
4	7.42250400568515\\
};
\addplot [color=blue,only marks,mark=*,mark options={solid,fill=blue,draw=mycolor2},forget plot]
  table[row sep=crcr]{%
4.25	7.13288523835455\\
};
\addplot [color=blue,only marks,mark=*,mark options={solid,fill=blue,draw=mycolor1},forget plot]
  table[row sep=crcr]{%
5	9.84861757416476\\
};
\addplot [color=blue,only marks,mark=*,mark options={solid,fill=blue,draw=mycolor2},forget plot]
  table[row sep=crcr]{%
5.25	9.45036167126423\\
};
\addplot [color=blue,mark size=0.5pt,only marks,mark=x,mark options={solid,draw=black},forget plot]
  table[row sep=crcr]{%
1	0\\
};
\addplot [color=blue,mark size=0.5pt,only marks,mark=x,mark options={solid,draw=black},forget plot]
  table[row sep=crcr]{%
1.25	0\\
};
\addplot [color=blue,mark size=0.5pt,only marks,mark=x,mark options={solid,draw=black},forget plot]
  table[row sep=crcr]{%
2	2.42363485823428\\
};
\addplot [color=blue,mark size=0.5pt,only marks,mark=x,mark options={solid,draw=black},forget plot]
  table[row sep=crcr]{%
2.25	2.23225324770536\\
};
\addplot [color=blue,mark size=0.5pt,only marks,mark=x,mark options={solid,draw=black},forget plot]
  table[row sep=crcr]{%
3	4.88818183063221\\
};
\addplot [color=blue,mark size=0.5pt,only marks,mark=x,mark options={solid,draw=black},forget plot]
  table[row sep=crcr]{%
3.25	4.59441456783594\\
};
\addplot [color=blue,mark size=0.5pt,only marks,mark=x,mark options={solid,draw=black},forget plot]
  table[row sep=crcr]{%
4	7.42250400568515\\
};
\addplot [color=blue,mark size=0.5pt,only marks,mark=x,mark options={solid,draw=black},forget plot]
  table[row sep=crcr]{%
4.25	7.13288523835455\\
};
\addplot [color=blue,mark size=0.5pt,only marks,mark=x,mark options={solid,draw=black},forget plot]
  table[row sep=crcr]{%
5	9.84861757416476\\
};
\addplot [color=blue,mark size=0.5pt,only marks,mark=x,mark options={solid,draw=black},forget plot]
  table[row sep=crcr]{%
5.25	9.45036167126423\\
};
\addplot [color=blue,mark size=2.0pt,only marks,mark=+,mark options={solid,draw=mycolor1},forget plot]
  table[row sep=crcr]{%
nan	nan\\
};
\addplot [color=blue,mark size=2.0pt,only marks,mark=+,mark options={solid,draw=mycolor2},forget plot]
  table[row sep=crcr]{%
nan	nan\\
};
\addplot [color=blue,mark size=2.0pt,only marks,mark=+,mark options={solid,draw=mycolor1},forget plot]
  table[row sep=crcr]{%
2.00746697219861	4.38960330388368\\
};
\addplot [color=blue,mark size=2.0pt,only marks,mark=+,mark options={solid,draw=mycolor2},forget plot]
  table[row sep=crcr]{%
2.24128978568764	3.30997770522568\\
};
\addplot [color=blue,mark size=2.0pt,only marks,mark=+,mark options={solid,draw=mycolor1},forget plot]
  table[row sep=crcr]{%
3.01603184646887	8.26553847098129\\
};
\addplot [color=blue,mark size=2.0pt,only marks,mark=+,mark options={solid,draw=mycolor2},forget plot]
  table[row sep=crcr]{%
3.24461879679314	8.26553847098129\\
};
\addplot [color=blue,mark size=2.0pt,only marks,mark=+,mark options={solid,draw=mycolor1},forget plot]
  table[row sep=crcr]{%
nan	nan\\
};
\addplot [color=blue,mark size=2.0pt,only marks,mark=+,mark options={solid,draw=mycolor2},forget plot]
  table[row sep=crcr]{%
4.24952156902406	11.3567172190925\\
};
\addplot [color=blue,mark size=2.0pt,only marks,mark=+,mark options={solid,draw=mycolor1},forget plot]
  table[row sep=crcr]{%
nan	nan\\
};
\addplot [color=blue,mark size=2.0pt,only marks,mark=+,mark options={solid,draw=mycolor2},forget plot]
  table[row sep=crcr]{%
nan	nan\\
};
\end{axis}
\end{tikzpicture}
~\\[-2ex]\centering{\scriptsize(b)}
\end{minipage}
\\
\scriptsize{
\begin{tabular}{c c}
 \ref{pgfplots:simconreg1} \texttt{conreg1} & \ref{pgfplots:simconreg2} \texttt{conreg2} \\
\end{tabular}
}
    \caption{Simulation results of the two convex programs (a) when the images are the projection of the complete 3D model and (b) when the images contain the projections of the 3D model as a proper subset. For each noise level, we do 50 experiments with different poses of the 3D model, with rotation ranging from $0^\circ$ to $90^\circ$. The RMSD gradually increases as the noise increases from no noise ($\infty$ dB) to 2 mm (9.6 dB).  \label{fig:fullsim}}
\end{figure}
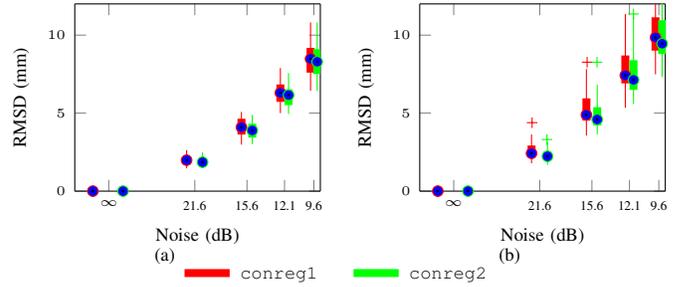

We observed exact recovery in both programs when the noise level is zero, which confirms our proof in Section \ref{section:Exact recovery}. Since our proposed methods are convex programs, the solution, in this case the ground truth rotation, does not depend on initialization. Fig.~\ref{fig:fullsimnoiseless} shows the results of both programs with no noise added to the points with respect to the initial pose of the 3D model. In our experiments, we set the tolerance level for the convex solver to $10^{-3}$. Fig.~\ref{fig:fullsim} shows the results for both programs with bounded noise added to each point in the image. In this case the RMSD increases proportionally as the noise $\varepsilon$ increases from 0 to 2 mm for both input conditions. 
We observe that the latter input condition of extra centerlines in the image may be a better approximation of a typical clinical scenario as often non vascular objects such as catheters are mis-segmented as vessels.

We compare our algorithm with the recent 2D/3D registration algorithm \texttt{OGMM} \cite{baka2014ogmm,dibildox2014ogmm}. In our experiments, we use our MATLAB implementation of \cite{baka2014ogmm} for two images with each individual gaussian distribution being isotropic. If we are to use \texttt{OGMM}, using the identity as an initialization, the recovery of the pose is not guaranteed unless we are in the $[0^\circ,10^\circ]$ range. In fact for most experiment we cannot find solution within 25 mm RMSD. We show the results from \texttt{OGMM} in Fig. \ref{fig:fullsimnoiseless}, where we plot the RMSD against different 3D poses. We note that there are many ways one can improve on our implementation of \texttt{OGMM} in terms of radius of convergence, such as annealing to zero the variance of Gaussian distributions starting from a large value as proposed in \cite{jian2011gmmreg}. Nevertheless, it still remains that registration methods based on Gaussian mixtures method \cite{jian2011gmmreg} can get stuck in a local optimum due to the nature of the cost.

\subsection{Real data}

\begin{figure}
\centering
  \setlength\fheight{2.5cm}
    \setlength\fwidth{0.18\textwidth}
\begin{minipage}{0.48\columnwidth}{\includegraphics{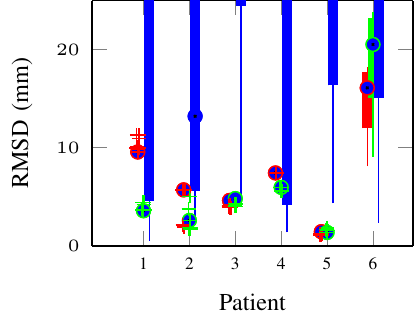}}%
~\\[-.9ex]\centering{\scriptsize(a)}
\end{minipage}
\begin{minipage}{.01\columnwidth}
\hfill
\end{minipage}
\begin{minipage}{0.48\columnwidth}{\includegraphics{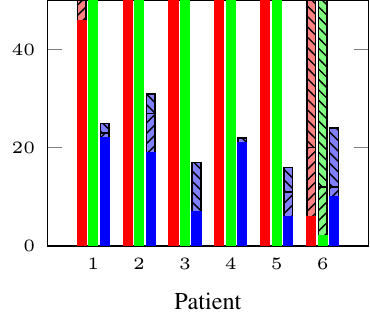}}
~\\[-.9ex]\centering{\scriptsize(b)}
\end{minipage}
\\
    \caption{(a) Results using the two convex programs \texttt{conreg1} (Red), \texttt{conreg2} (Green), and \texttt{OGMM} (Blue) for data of 6 patients. (b) Number of instances for which the RMSD error was less than 10 mm (solid color), 15 mm (north east lines) and 25 mm (north west lines), respectively. For each of the 6 patients, for each rotation range we generate 10 different instances resulting in a total of 50 instances per patient. }
    \label{fig:convex real}
\end{figure}

To illustrate the application of our algorithm on clinical data, we have used pre-processed images from Chronic Total Occlusion (CTO) cases treated using PCI. Both our methods \texttt{(REG1)} and \texttt{(REG2)} operate on point-sets and hence any images must be pre-processed to segment relevant structures such as vessels or organ surfaces. In case of CTO cases, the relevant structures to align are the centerlines of the coronaries as seen in pre-operative CTA images and intra-operative fluoroscopic images. To extract the centerlines in CTA images, we have used the segmentation algorithm presented in~\cite{zheng2013robust}. The resultant centerlines also provide topological information allowing us to compute geometric features such as tangents. These centerlines were re-sampled at uniform interval to result in the 3D point-set for registration methods. The fluoroscopic images were processed using 2D vessel segmentation algorithms~\cite{deschamps2001fast,sironi2014multiscale} to provide respective centerlines in the projection images. These were re-sampled to provide the 2D point-sets. The projection matrix was estimated from the C-arm parameters stored in the DICOM header of the image.

We test our algorithm on six sets of clinical data in which a medical expert has aligned the 3D model to 2D fluoroscopic images. We consider this the ground truth to compare against our results. To characterize the recovery of our algorithm, we perturb the aligned 3D model by an arbitrary rotation within the ranges $[0^\circ,10^\circ],[10^\circ, 20^\circ],[20^\circ, 40^\circ],[40^\circ, 60^\circ]$, and $[60^\circ, 90^\circ]$. For each rotation range, we generate 10 different instances. The results are detailed in figure~\ref{fig:convex real} and \ref{fig:convex real angles}. Again for comparison we use results from \texttt{OGMM}. We see that we are able to recover the pose to within 10 mm for most cases. Further, the recovery of pose in consistent irrespective of the initial pose. Whereas, in general we cannot obtain RMSD within 25 mm with \texttt{OGMM}. On average, there are 80 points in each model and image. The average running time for \texttt{conreg1} and \texttt{conreg2} are 0.4s and 15s, respectively on an Intel\textregistered\enspace CORE\texttrademark\enspace i7 2.2GHz running MATLAB\textregistered. \texttt{conreg2} is significantly slower than \texttt{conreg1} due to the additional positive semidefinite matrix that encodes cycle consistency. The effect of additional point consistency terms between the images in $\pb$ is more prominent in real patient data. We note that clinical data typically may have more number of ambiguous 3D model to projection point matches that cannot be resolved adequately using $\pa$ without enforcing explicit matching between images. In such cases, the point consistency terms in $\pb$ may aid in resolution of such ambiguities.
We present the images of the 3D model after the 2D/3D registration in Fig.~\ref{fig:realimage}.

\begin{figure}
 \centering
  \setlength\fheight{2.5cm}
    \setlength\fwidth{0.18\textwidth}
\begin{minipage}{0.48\columnwidth}{\includegraphics{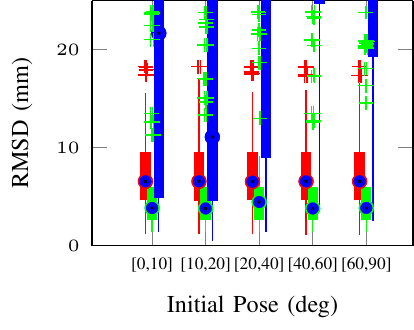}}%
~\\[-.9ex]\centering{\scriptsize(a)}
\end{minipage}
\begin{minipage}{.01\columnwidth}
\hfill
\end{minipage}
\begin{minipage}{0.48\columnwidth}{\includegraphics{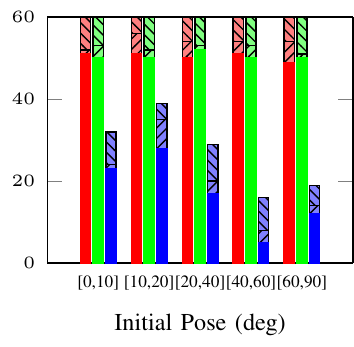}}
~\\[-.9ex]\centering{\scriptsize(b)}
\end{minipage}
\\
    \caption{(a) Results using the two convex programs \texttt{conreg1} (Red), \texttt{conreg2} (Green), and \texttt{OGMM} (Blue) for data of 6 patients with respect to angle of pose. (b) Number of instances for which the RMSD error was less than 10 mm (solid color), 15 mm (north east lines) and 25 mm (north west lines), respectively with respect to angle of pose. For each of the 6 patients, for each rotation range we generate 10 different instances resulting in a total of 50 instances per patient.}
    \label{fig:convex real angles}
\end{figure}

In Fig.~\ref{fig:realimage} we not only perform registration using \texttt{conreg1} and \texttt{conreg2} but also further refine their solutions using \texttt{gmmreg}. Typically for convex relaxation techniques such as the one adopted in this paper, it is not unusual to prove the convex relaxed solutions to the noisy data lie in the vicinity of the ground truth \cite{wang2013exact,candes2010matrix,chaudhury2013gret}. We, in fact, demonstrate via simulations with synthetic data that the solutions of algorithms \texttt{conreg1} and \texttt{congreg2} lie within certain radius (proportional to the noise magnitude) of the ground truth (Fig. \ref{fig:fullsim}). Thus, local optimizers such as \texttt{ICP} or GMM based methods after \texttt{conreg1} and \texttt{conreg2} could  be used to further refine the results. 
As local search based registration methods often fail to reach the global optimum due to non-convex nature of the cost, this combined approach ensures the ground truth is within the basin of convergence of any local methods. It is clear that this strategy brings improvement in row 1,3 and 4 in Fig.~\ref{fig:realimage} especially when registering using \texttt{conreg1}. While adding a local refinement step brings little changes visually in row 2 and 5 and even seems counterproductive for row 6 in Fig.~\ref{fig:realimage}, the RMSD when comparing with the ground truth given by the clinician is in fact lowered. For example, it is shown in Fig.~\ref{fig:convex real}a that \texttt{conreg1} and \texttt{conreg2} fail to achieve 15 mm RMSD for images of patient 6. However after a local refinement all test cases for 6 patients achieve 15 mm RMSD (Table.~\ref{table:summary}). The deterioration of registration quality for patient 6 in image plane A stems from the fact that the local refinement tries to fit the model to the vessels in both of the image planes. It may bring improvement to registration in one of the image plane while sacrificing the registration quality for the other image plane in order to lower the overall cost. Results combining algorithms \texttt{conreg1} and \texttt{congreg2} with \texttt{OGMM} and a variant of \texttt{LM-ICP} \cite{fitzgibbon2003robust} as local search methods are presented in Table~\ref{table:summary}. We include two different local methods to illustrate that the final results are not dependent on the local refinement method utilized. Though the desired accuracy of registration is highly dependent on application with some applications desiring sub-millimeter accuracy and others being satisfied with centimeters, in order to facilitate a comparison between different algorithms, we bin the RMSD error into ranges. As shown in Table~\ref{table:summary}, most cases fail to converge to within 25 mm when using the local methods alone. Our proposed method attain results well within the 10 mm for almost all cases, with sub-millimeter differences between the use of \texttt{OGMM} and \texttt{LM-ICP} as refinement methods. Though $\pb$ has some instances with larger RMSD error, overall its performance is better than $\pa$ in terms of median RMSD error. Both methods outperform using \texttt{OGMM} or the \texttt{LM-ICP} variant alone for robustness to initial starting point and median RMSD.

\newcommand\tstrut{\rule{0pt}{2.4ex}}
\newcommand\bstrut{\rule[-1.0ex]{0pt}{0pt}}
\begin{table}[h!]
\caption{The number of experiments with RMSD within a given range on 6 real data sets. Each patient data set was simulated with a total of 50 different poses, for 6 patients. The number in bracket is the median of the RMSD for solutions within respective thresholds. The range of RMSD is reported for attempts with RMSD greater than 25 mm.
\protect{\label{table:summary}}
}
{\small
\centering 
\begin{tabular}{c c c c} 
\toprule 
Methods & $<$10 mm & $<$ 15 mm & $>$25 mm\\
\midrule 
\footnotesize{\texttt{ICP}} & 19 (5.4) & 22 (5.71) & 269 [26.3, 237] \tstrut \bstrut \\
\footnotesize{\texttt{OGMM}} & 85 (4.57) & 101 (5.14) & 165 [25.2, 738] \tstrut \bstrut \\\midrule
\footnotesize{\texttt{conreg1}} & 252 (5.71) & 270 (5.71) &  \tstrut \bstrut \\
\footnotesize{\texttt{conreg1}+\texttt{OGMM}} & 300 (3.80) & 300 (3.80) &  \tstrut \bstrut \\
\footnotesize{\texttt{conreg1}+\texttt{ICP}}& 291 (4.40) & 299 (4.73)&  \tstrut \bstrut \\\midrule
\footnotesize{\texttt{conreg2}} & 262 (3.58) & 262 (3.58) &  \tstrut \bstrut \\
\footnotesize{\texttt{conreg2}+\texttt{OGMM}} & 291 (2.31) &  292 (2.31) & \tstrut \bstrut \\
\footnotesize{\texttt{conreg2}+\texttt{ICP}}& 293 (2.56) & 300 (2.68)&  \tstrut \bstrut \\
\bottomrule 
\end{tabular}
}
\end{table}

\begin{figure*}
\centering
\includegraphics[width=0.95\textwidth]{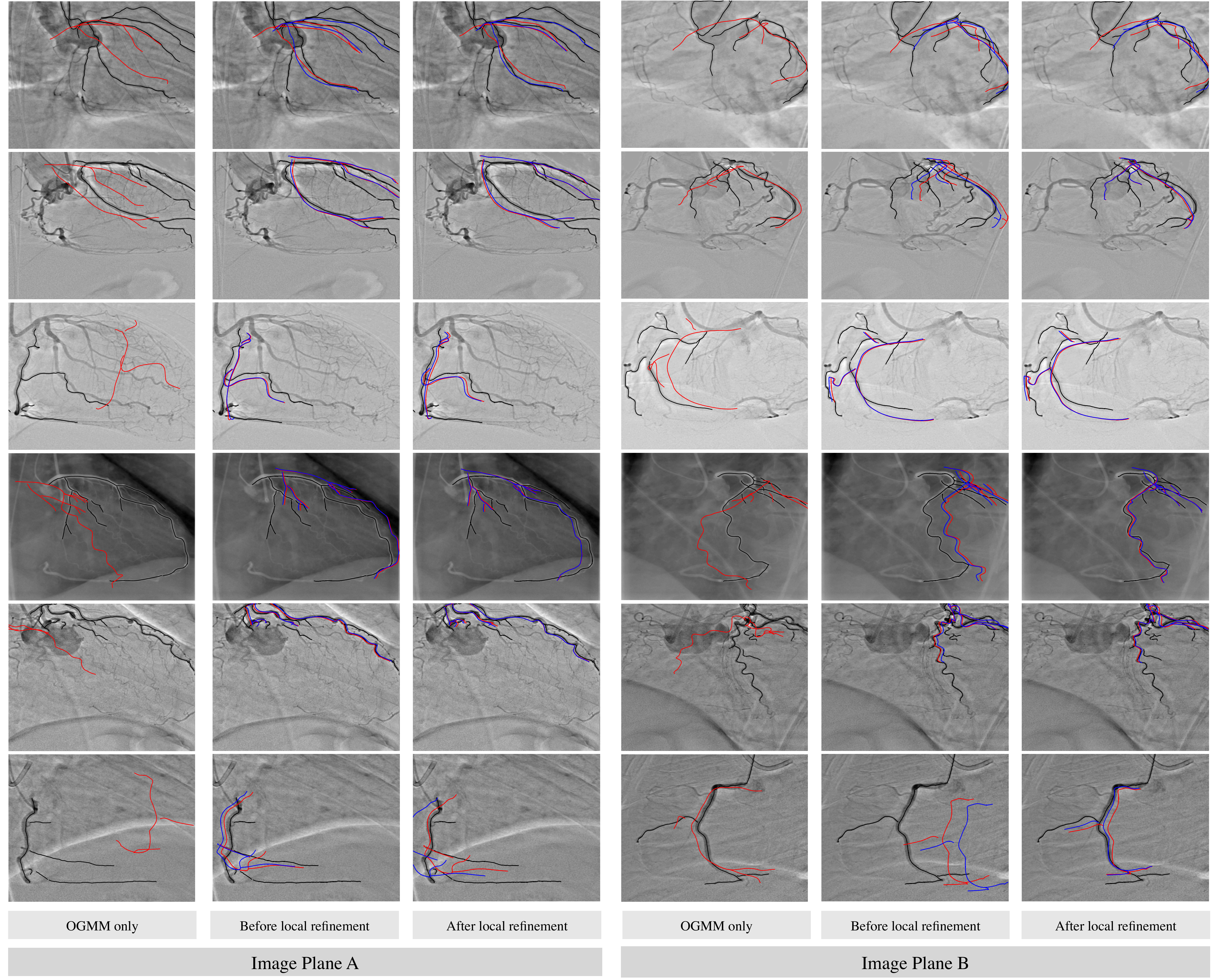}%
\caption{Each row is an illustrative sample result out of the 50 different initial poses used for registration for each single patient. The black lines are the segmented vessel centerline in the X-ray images. Column 1 and 4: Red lines denote the 3D model posed by \texttt{OGMM}. Column 2 and 5: Red lines denote the 3D model posed by $R^\star$ from $\pa$. Blue lines denote the 3D model posed by $R^\star$ from $\pb$. Column 3 and 6: Red lines denote the 3D model posed by $R^\star$ from $\pa$ + \texttt{OGMM}. Blue lines denote the 3D model posed by $R^\star$ from $\pb$ + \texttt{OGMM}. For brevity the initial pose is omitted as it moves the model out of the field of view. The results with \texttt{LM-ICP} as the local refinement method are omitted as they are visually not distinct from the results using \texttt{OGMM} method. As noted in Table~\ref{table:summary} the difference between the refinement from the two local methods is sub-millimeter. In this figure, the cases reported when using \texttt{OGMM} alone has error ranging from 12 mm to 40 mm. Except patient 6, the registration error of using the convex programs alone or with refinement are all below 10 mm. We refer the readers to Table \ref{table:summary} for a quantitative comparison of different methods.}
\label{fig:realimage}
\end{figure*}

\section{Discussion and Conclusion}
\label{section:Conclusion}
A new formulation for 2D/3D registration based on convex optimization programs has been proposed, and applied to the problem of registering a 3D centerline model of the coronary arteries with a pair of fluoroscopic images. The proposed optimization programs jointly optimize the correspondence between points and their projections, and the relative transformation. When the global optimum of the convex programs coincide with the solution to the 2D/3D registration problem, efficient search of the solution regardless of initialization can be done via standard off-the-shelf convex programming software.

In the first program presented, we find the correspondence and transformation simultaneously between two degenerate point clouds obtained by projection of a 3D model and the model itself. In the second program presented, we solve a variant of the first program where a-priori information on the estimated correspondence between degenerate point clouds is exploited. Such estimate of correspondence could come from feature matching or known epipolar constraints. We show that under certain conditions a convexly relaxed versions of these programs converge to recover the solution of the original MINLP point-set registration program. This is in contrast to \texttt{ICP}-type algorithms, where the correspondences and transformation are optimized alternatively. A natural extension of both programs is to consider additional descriptors and features attached to a particular point. We presented a framework to incorporate either transformation invariant features or features that transform according to rotation such as gradients. We illustrate this formulation with the use of tangent of the vessel derived from a neighborhood patch.

The proof of exact recovery for the noiseless case is presented and the theoretical  exact recovery results corroborated using synthetic data. We also evaluated the performance of the proposed formulations using noisy observations obtained by adding uniformly distributed bounded noise to the point sets. These exhibit convergence to within a radius of the exact solution that is proportional to the added noise level.
Both cases where the projected image fully captures the 3D model and cases when there are extra centerlines in the image were simulated. These mimic the clinical scenario where there may be surgical tools and devices such as catheters that have a similar appearance as that of a contrast filled vessel captured in the image. We find that under such conditions using local optimization programs such as \texttt{OGMM} alone, and using identity rotation as initialization, the recovery of the pose is not guaranteed unless we are in $0-10$ degree range from the optimal solution. In fact, for most experiments we cannot find a solution that converges to within 25 mm RMSD error. We note that there are ways one can improve on the implementations of local optimizers in terms of range of convergence by using methods such as annealing. Nevertheless, it still remains that registration methods based on local search may fail to reach the global optimum due to non-convex nature of the cost. We hope to bridge the gap in such cases, in addition to providing an explicit correspondence match.

Finally, experiments using multiple X-ray biplane angiography frames have also been presented. The validation was performed by perturbing the original pose by a random pose in a wide range of values. The pose and the correspondence can be recovered to within 10 mm RMSD error in most cases. Our algorithm can be used as a pre-processing step to provide a high quality starting point for a local registration algorithm such as \texttt{OGMM} or alone to provide recovery of transformation and correspondence.

There are several ways our proposed method can be improved. As shown in our proof, a condition we need for exact recovery is that the point sets are generic, implying the centroids of $X$ and $\I_i$'s are not zero. This limits our ability of including a translation by centering. A way to consider the translational degree of freedom is to include additional features. For example, we can use the tangent information in registration by introducing a cost in the form of $\|\Psi_i RT_X - T_{\I_i} P_i \|_{2,1}$, where $T_X$ and $T_{\I_i}$ in $\mathbb{R}^{3\times m}$ are the tangent vectors computed for points in the model and the image. In this case, with high probability the tangent vectors are not ``centered''  and we can still apply our proposed methods. The introduction of an additional cost term that encourages matching of the invariant features as in Section \ref{section:Additional features} can be helpful in similar way. We also note that during revision, a work in progress on solving the joint pose and correspondence problem using a tighter relaxation \cite{kezurer2015sdp} is brought to our attention. According to the communication with the authors there is no restriction in considering the translational degree of freedom in their relaxation. Another similar problem is the need of equal number of points in both the model and image in the proof. While it is possible for our methods to work when the number of points are different, the performance is less stable and it is important to identify extra features.

Since the focus of this paper is to introduce a novel formulation to the 2D/3D registration problem, we did not put an emphasis on designing a fast convex program solver. Currently we are using the \texttt{CVX} package, a library of interior point solvers for convex programming. Interior point method is inherently a second order method that requires high complexity computations involving the Hessian. It certainly does not scale as well as alternating minimization approaches such as \texttt{ICP} or first order methods such as \texttt{GMM}. Therefore it will be desirable to design fast first order optimization algorithms such as Augmented Lagrangian Method of Multiplier (ADMM) \cite{boyd2011admm} or conditional gradient descent \cite{frank1956fw} that have low complexity to solve \texttt{conreg1} and \texttt{conreg2}.

Currently we only have exact recovery guarantees when there is no noise in the image. As mentioned before, when data is plagued by noise the solution of \texttt{conreg1} and \texttt{conreg2} will no longer be the solution to (REG1) and (REG2). However, as observed in our numerical experiments, there is stability in the sense that $R^\star$ deviates gradually from the ground truth $\bar R$. A theoretical establishment of such stability will establish the convex programs as approximation algorithms for solving the registration problem.

Lastly, our proposal can only deal with rigid registration at this point. Since the consideration of non-rigid transformation will introduce further nonlinearity into the problem, a more elaborate convex relaxation has to be devised. Although we have not experimented with such scenarios, we recommend passing the solution from \texttt{conreg1} or \texttt{conreg2} to a non-rigid registration method based on local searches for refinement purpose.

\appendices
\section{}
\label{section:consistency}
\begin{proof}
The equations (\ref{I1 consistency}) and (\ref{I2 consistency}) simply follows from the fact that
\begin{eqnarray}
0 &\leq&  \| \Psi_1 \tilde R X' - \p_1 X' \tilde P_{1} \|_{2,1} + \| \Psi_2 \tilde R X'- \p_2 X' \tilde P_{2} \|_{2,1}\cr
  &\leq&  \| \Psi_1 I_3 X' - \p_1 X' I_m \|_{2,1} + \| \Psi_2 I_3 X'- \p_2 X' I_m \|_{2,1} \cr
  &=& 0.
\end{eqnarray}
The second inequality follows from the optimality of $\tilde P_{1}, \tilde P_{2}$ and $\tilde R$.

Now
\begin{eqnarray}
\tilde R X' &=& X' \tilde P_{1} + \eta_1\cr
\tilde R X' &=& X' \tilde P_{2} + \eta_2
\end{eqnarray}
where $\eta_1\in \mathbb{R}^{3\times m}$, $\eta_2 \in \mathbb{R}^{3\times m}$ satisfies $\p_1 \eta_1 = \p_2 \eta_2 = 0$. By induction, we have
\begin{eqnarray}
\tilde R^n X' &=& X' \tilde P^n_{1} + \eta_1(\tilde P_{1}^{n-1}+\ldots+ \tilde P_{1} + 1)\cr
\tilde R^n X' &=& X' \tilde P^n_{2} + \eta_2(\tilde P_{2}^{n-1}+\ldots+ \tilde P_{2} + 1).
\end{eqnarray}
Multiplying these two equations by $\p_1$ and $\p_2$ respectively we get (\ref{I1 power consistency}) and (\ref{I2 power consistency}).
\end{proof}

\section{}
\label{section:ergodicity}
\begin{proof}
Our goal is to show the solution $\tilde P_{1}, \tilde P_{2}, \tilde R$ being $I_m, I_m, I_3$ respectively. It suffices to prove uniqueness of $I_m,I_m, I_3$ as solution to $\pa$ in a local neighborhood, since for a convex program local uniqueness implies global uniqueness. Therefore we will assume $\|\tilde P_{1} - I_m \|_1< 1$ and $\|\tilde P_{2} - I_m \|_1< 1$. We now show $\tilde P_{1}, \tilde P_{2}$ are doubly stochastic matrices. The constraints $\mathbf{1}^T P_{1} = \mathbf{1}^T P_{2} = \mathbf{1}^T$ in $\pa$ indicates
\begin{equation}
\label{sum is m}
\sum_{i=1}^m (\sum_{j=1}^m \tilde P_{1}(i,j)) = \sum_{i=1}^m (\sum_{j=1}^m \tilde P_{2}(i,j)) = m.
\end{equation}
Since for every $i = 1,\ldots,m$, $\sum_{j=1}^m P_{1}(i,j) , \sum_{j=1}^m P_{2}(i,j)\leq 1$, it has to be that
\begin{equation}
\sum_{j=1}^m \tilde P_{1}(i,j) , \ \sum_{j=1}^m \tilde P_{2}(i,j) = 1 \ \ \ \mathrm{for\ every}\ i = 1,\ldots,m,
\end{equation}
or else it violates (\ref{sum is m}).

Based on these facts and regarding $\tilde P_1, \tilde P_2$ as transition matrices of a Markov chain, we will use the fundamental theorem of Markov chain to prove the lemma. Before stating the theorem, we need to give two standard definitions of stochastic processes.
\begin{definition}
A Markov chain with transition matrix $P\in \mathbb{R}^{m\times m}$ is aperiodic if the period of every state is 1, where the period of a state $i$, $i=1,\ldots,m$ is defined as
\begin{equation}
\mathrm{gcd}\{k: P^k(i,i)>0\}
\end{equation}
\end{definition}
\begin{definition}
A Markov chain with transition matrix $P\in \mathbb{R}^{m\times m}$ is irreducible if for all $i,j$, there exists some $k$ such that $P^k(i,j)>0$. Equivalently, if we regard $P$ as the adjacency matrix of a directed graph, it means the graph corresponding $P$ is strongly connected.
\end{definition}

We then state the fundamental theorem of Markov chain.
\begin{theorem}[Fundamental Theorem of Markov Chain \cite{dasgupta2011probability}]
If a Markov chain with transition matrix $P\in \mathbb{R}^{m\times m}$ is irreducible and aperiodic, then
\begin{equation}
\lim_{n\rightarrow \infty} P^n(i,j) = \pi_j\   \forall i = 1,\ldots,m,
\end{equation}
where the limiting distribution $\pi = [\pi_1,\ldots, \pi_m]$ is the unique solution to the equation
\begin{equation}
\pi = \pi P.
\end{equation}
\end{theorem}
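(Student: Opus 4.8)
The plan is to prove the three assertions in turn: that $P$ has a strictly positive power, that each column of $P^n$ flattens to a single limit independent of the starting state, and that this limit is the unique stationary vector. First I would establish \emph{primitivity}, i.e. the existence of an $N$ with $P^N(i,j)>0$ for all $i,j$. For a fixed state $i$ the set of return times $\{k\ge 1 : P^k(i,i)>0\}$ is closed under addition, and aperiodicity says its gcd equals $1$; a standard numerical-semigroup fact then guarantees this set contains every sufficiently large integer. Combining this with irreducibility (each $j$ is reachable from $i$ in finitely many steps) and taking a common threshold over the finitely many ordered pairs yields the desired $N$. I expect this number-theoretic step to be the main obstacle, since the remainder of the argument is essentially mechanical once a uniformly positive power is available.

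Next, setting $\delta := \min_{i,j} P^N(i,j) > 0$, I would fix a column index $j$ and define $M_n := \max_i P^n(i,j)$ and $m_n := \min_i P^n(i,j)$. Because $P^{n+1}(i,j) = \sum_k P(i,k)\,P^n(k,j)$ is a convex combination of the numbers $P^n(k,j)$, the sequence $M_n$ is nonincreasing and $m_n$ is nondecreasing, so $M_n - m_n$ is nonincreasing. The key estimate is a contraction across $N$ steps: comparing two rows $i,i'$ of $P^{n+N}$ and splitting $\sum_k (P^N(i,k)-P^N(i',k))\,P^n(k,j)$ into its positive and negative parts, the overlap $\sum_k \min(P^N(i,k),P^N(i',k))\ge\delta$ forces
\begin{equation}
M_{n+N}-m_{n+N} \le (1-\delta)\,(M_n-m_n).
\end{equation}
Hence $M_n - m_n \to 0$ geometrically, and since $m_n \le P^n(i,j) \le M_n$, the quantity $P^n(i,j)$ converges to a limit $\pi_j$ that does not depend on $i$.

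Finally I would identify $\pi$ and prove uniqueness. The limit of $P^n$ is the rank-one matrix $\mathbf{1}\pi^T$; passing to the limit in the row-sum identity $\sum_j P^n(i,j)=1$ together with $\pi_j\ge 0$ shows $\pi$ is a genuine probability vector, and passing to the limit in $P^{n+1}=P^n P$ gives $\pi=\pi P$. For uniqueness, any row vector $\mu$ with $\mu=\mu P$ satisfies $\mu=\mu P^n \to \mu\,(\mathbf{1}\pi^T)=(\mu\mathbf{1})\,\pi=\pi$, using $\mu\mathbf{1}=1$; thus $\pi$ is the only solution of $\pi=\pi P$. I would close by remarking that in the intended application the relevant transition matrices have strictly positive diagonals, a direct consequence of $\|\tilde P_i - I_m\|_1<1$, so that aperiodicity holds automatically and only irreducibility within each component must be verified.
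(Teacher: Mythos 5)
Your proof is correct, but note that the paper does not actually prove this statement at all: it is quoted verbatim as a known result, cited to a probability textbook, and used as a black box inside the proof of the ergodicity lemma for $\tilde P_1,\tilde P_2$. What you have supplied is a complete, self-contained proof via the classical contraction (Markov--Doeblin) argument: the numerical-semigroup step correctly upgrades aperiodicity plus irreducibility to primitivity of $P^N$, the overlap bound $\sum_k \min\bigl(P^N(i,k),P^N(i',k)\bigr)\ge\delta$ gives the geometric decay of $M_n-m_n$, and the identification and uniqueness of $\pi$ follow by passing to the limit in $P^{n+1}=P^nP$ and in $\mu=\mu P^n$. The only point worth flagging is that uniqueness of the solution to $\pi=\pi P$ holds only among probability (row-normalized) vectors, since any scalar multiple also satisfies the equation; your argument implicitly assumes $\mu\mathbf{1}=1$, which is the intended reading of the theorem but should be stated. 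Your closing remark correctly mirrors how the theorem is deployed in the paper, where $\|\tilde P_i - I_m\|_1<1$ forces strictly positive diagonals and hence aperiodicity of every irreducible block. In short: the paper buys brevity by citation; your route buys self-containedness and, as a bonus, an explicit geometric rate of convergence that the cited statement does not provide.
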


We are now ready to prove the lemma. From the assumption $\|\tilde P_{1} - I_m \|_1< 1$ and $\|\tilde P_{2} - I_m \|_1< 1$, we have
\begin{equation}
\label{aperiodic}
\mathrm{diag}(\tilde P_{1})>0,\  \mathrm{diag}(\tilde P_{2})>0.
\end{equation}
This means the two doubly stochastic matrices $\tilde P_{1}$, $\tilde P_{2}$ are aperiodic. Furthermore we can decompose the matrices $\tilde P_{1}, \tilde P_{2}$ into irreducible components. Such decomposition essentially amounts to decomposing a graph into connected components, if we regard a doubly stochastic matrix as weighted adjacency matrix of a graph with $m$ nodes. Now denote $a_1, a_2,\ldots, a_{k}$ and  $b_1, b_2,\ldots, b_{l}$ as index sets of the irreducible components of matrix $\tilde P_{1}$ and $\tilde P_{2}$ respectively. For such decomposition, we know $\tilde P_{1}(a_i,a_j) = 0$ for $i\neq j$ (similarly for $\tilde P_2$). Therefore for each aperiodic and irreducible block $\tilde P_{1}(a_i,a_i)$ of $\tilde P_{1}$ and $\tilde P_{2}(b_i,b_i)$ of $\tilde P_{2}$, we have
\begin{eqnarray}
\label{limiting distribution}
(1/ \vert a_i \vert) \mathbf{1}^T \tilde P_{1}(a_i,a_i) &=& (1/ \vert a_i \vert) \mathbf{1}^T,\cr
(1/ \vert b_i \vert) \mathbf{1}^T \tilde P_{2}(b_i,b_i) &=& (1/ \vert b_i \vert) \mathbf{1}^T.
\end{eqnarray}
Applying fundamental theorem of Markov chain to each irreducible and aperiodic block along with equation (\ref{limiting distribution}), we have
\begin{eqnarray}
\label{ergodicblock}
\lim_{n\to\infty} \tilde P_{1}^n (a_i,a_i) &=& (1/ \vert a_i \vert) \mathbf{1} \mathbf{1} ^T,\cr
\lim_{n\to\infty} \tilde P_{2}^n (b_i,b_i) &=& (1/ \vert b_i \vert) \mathbf{1} \mathbf{1} ^T
\end{eqnarray}
and zero for all other indices. We note that the multiplication $X_{a_i}(1/ \vert a_i \vert) \mathbf{1} \mathbf{1}^T $ results $\vert a_i\vert$ copies of the centroid of $X_{a_i}$. Hence the limit of each irreducible component of $P_{1}, P_{2}$ is an averaging operator.

We now take limit of equations (\ref{I1 power consistency}) and (\ref{I2 power consistency}). Multiplying both (\ref{I1 power consistency}) and (\ref{I2 power consistency}) from the right with $[0\ 0\ 1]$, and using the fact that $[0\ 0\ 1] \p_1 = [0\ 0\ 1] \p_2 = [0\ 0\ 1]$  we get
\begin{equation}
\label{zaxisconsistency}
[0\ 0\ 1] X \lim_{n\to\infty} \tilde P_{1}^n = [0\ 0\ 1]  X \lim_{n\to\infty} \tilde P_{2}^n.
\end{equation}
Now assuming the point set $X$ contains generic coordinates, we know that no two set of points from $X$ have centers with same $z$ coordinates. Combining this fact with (\ref{ergodicblock}) and (\ref{zaxisconsistency}), we get $\{a_1, a_2,\ldots, a_{k}\} = \{b_1, b_2,\ldots, b_{l}\}$ and $\lim_{n\to\infty} P_{1}^n = \lim_{n\to\infty} P_{2}^n = A$.

\end{proof}

\section*{Acknowledgment}
The authors thank Ramon van Handel at Princeton University for the discussion of the proof of exact recovery. We also would like to acknowledge Professor Dr G.S.Werner, Klinikum Darmstadt, Darmstadt, DE for the image data and thank Andreas Meyer, Siemens AG for discussions related to clinical applications of our algorithms.

\ifCLASSOPTIONcaptionsoff
  \newpage
\fi

\bibliographystyle{IEEEtran}

\bibliography{./IEEEabrv,./bibref}

\begin{IEEEbiography}[{\includegraphics[width=1in,height=1.25in,clip,keepaspectratio]{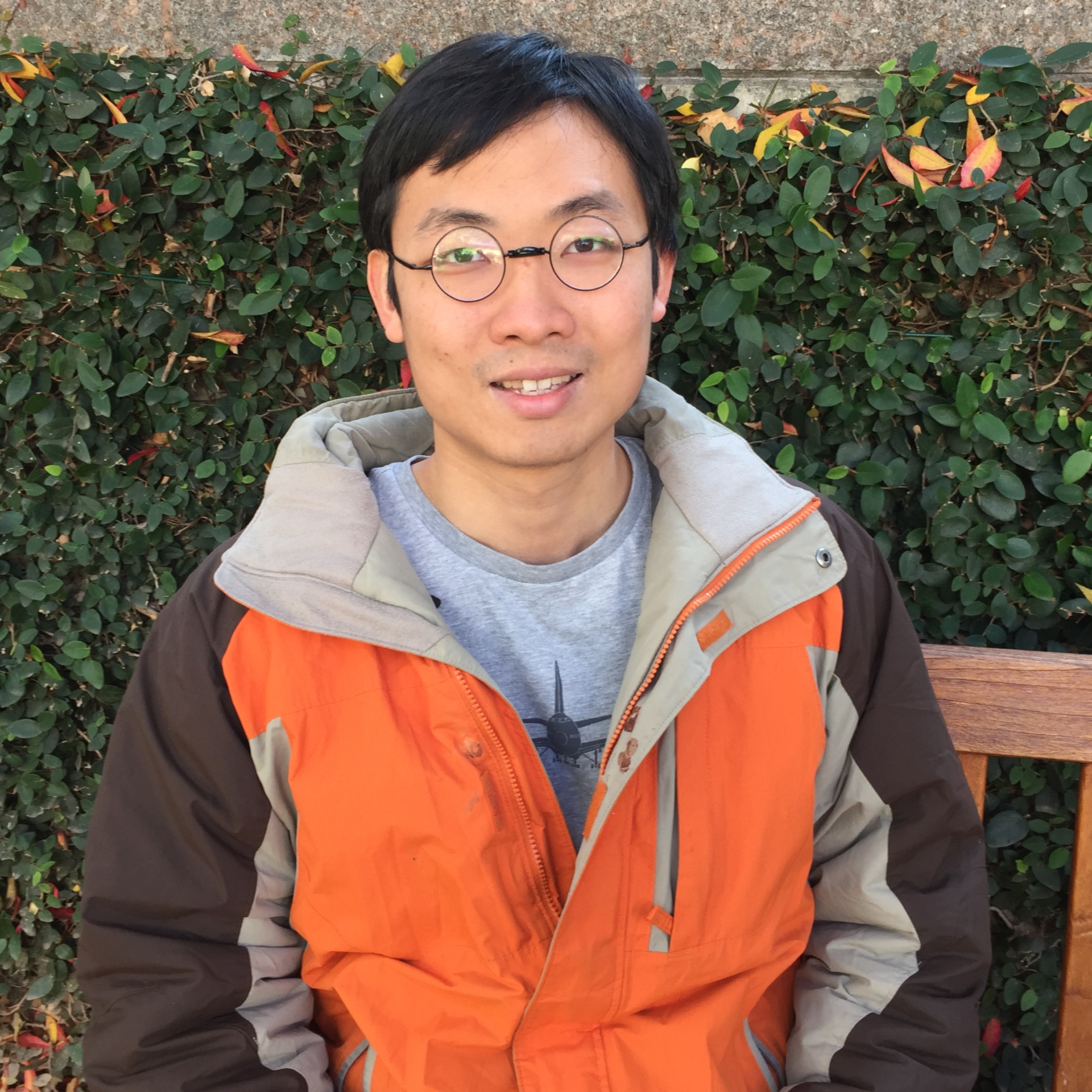}}]%
{Yuehaw Khoo} is borned in Malaysia. He is currently Ph.D. degree candidate in physics at Princeton University, advised by Professor Amit Singer. Previously he obtained B.Sc. degree (with highest distinction) in physics in 2009 at University of Virginia .
He worked at Siemens Research Corporation, Princeton, New Jersey, USA as a summer intern in year 2014. In 2009-2010 he worked as a research intern in the lab of Professor Gordon Cates at University of Virginia. He is interested in the application of convex programming relaxations to protein structural calculation and computer vision problems.
\end{IEEEbiography}

\begin{IEEEbiography}[{\includegraphics[width=1in,height=1.25in,clip,keepaspectratio]{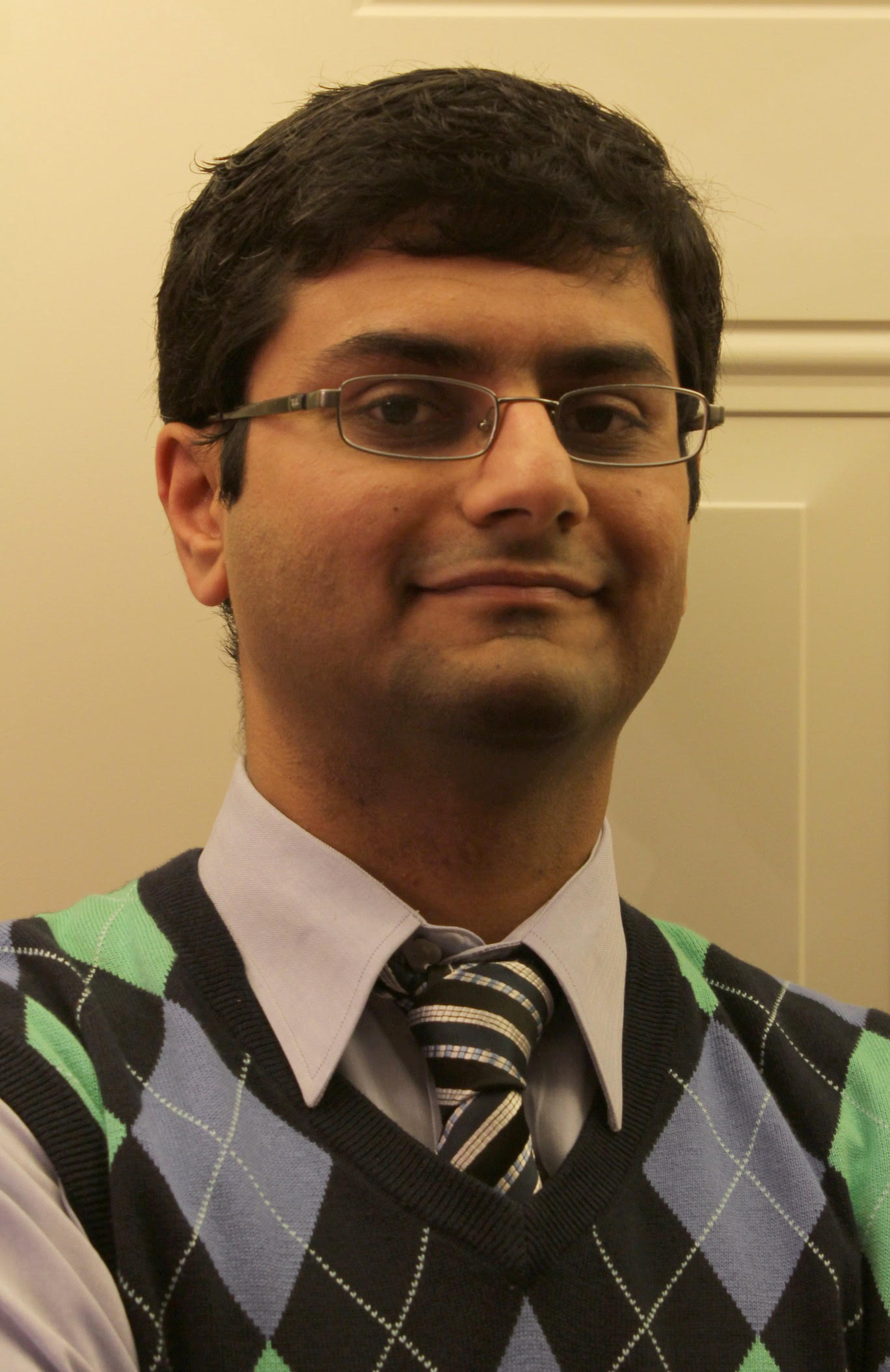}}]{Ankur Kapoor} (S’04–M’07) received the Bachelor of Engineering degree (with highest honors) in mechanical
and electrical and electronics from the Birla Institute of Technology and Science, Pilani, India, in 2000, and the M.S degree in computer science
and the Ph.D. degree in computer science from the Johns Hopkins University, Baltimore, MD, in 2006 and 2007, respectively.
He was a Research Fellow at Radiology and Imaging Sciences, Clinical Center, National Institutes of Health, Bethesda, MD till 2011. He currently a scientist at Medical Imaging Technologies, Siemens Healthcare. His research interests include robot assisted minimally invasive surgery, architecture for surgical systems, and image or visually guided interventions and surgery.
\end{IEEEbiography}

\end{document}